\documentclass[letterpaper, conference]{IEEEtran}
\IEEEoverridecommandlockouts

\usepackage{array}
\usepackage{textcomp}
\usepackage{stfloats}
\usepackage{url}
\usepackage{verbatim}
\usepackage{graphicx}
\usepackage{times}

\usepackage{bm}
\usepackage[numbers]{natbib}
\usepackage{xcolor}
\usepackage{paralist}
\usepackage{comment}
\usepackage{footnote}
\usepackage{graphicx}
\usepackage{amsmath, amsfonts, amssymb, amsthm}
\usepackage[pagebackref=true,breaklinks=true,colorlinks=true, citecolor=magenta,linkcolor=orange,urlcolor=blue]{hyperref}
\usepackage[left=.78in,top=1in,right=.78in,bottom=.9in,headsep=0.25in,headheight=20pt,heightrounded]{geometry}

\newtheorem{property}{Property}

\newtheorem{remark}{Remark}

\newtheorem{theorem}{Theorem}
\newtheorem{corollary}{Corollary}

\DeclareGraphicsExtensions{.pdf,.jpeg,.png,.eps}

\hyphenation{op-tical net-works semi-conduc-tor IEEE-Xplore}
\def\BibTeX{{\rm B\kern-.05em{\sc i\kern-.025em b}\kern-.08em
		T\kern-.1667em\lower.7ex\hbox{E}\kern-.125emX}}
\addtolength{\textfloatsep}{-5mm}
\addtolength{\floatsep}{-5mm}
\usepackage[compact]{titlesec}

\pdfoutput=1    

\begin{document}
	\title{
	Lagrangian Properties and Control of Soft Robots Modeled with Discrete Cosserat Rods.
	} 
	
	\author{L{e}k{a}n M{o}l{u}, Shaoru Chen,  and Audrey Sedal
		\thanks{
			Lekan Molu and Shaoru Chen are with Microsoft Research NYC. Audrey Sedal is with McGill University's Department of Mechanical Engineering.
			Emails: \{lekanmolu, shaoruchen\}@microsoft.com, audrey.sedal@mcgill.ca.
			}
		}
	
	
	\maketitle
	\thispagestyle{empty}
	\pagestyle{empty}


\newcommand{\lb}[1]{\textcolor{light-blue}{#1}}
\newcommand{\bl}[1]{\textcolor{blue}{#1}}

\newcommand{\maybe}[1]{\textcolor{gray}{\textbf{MAYBE: }{#1}}}
\newcommand{\inspect}[1]{\textcolor{blue}{\textbf{CHECK THIS: }{#1}}}
\newcommand{\more}[1]{\textcolor{red}{\textbf{MORE: }{#1}}}
\renewcommand{\figureautorefname}{Fig.}
\renewcommand{\sectionautorefname}{$\S$}
\renewcommand{\equationautorefname}{equation}
\renewcommand{\subsectionautorefname}{$\S$}
\renewcommand{\chapterautorefname}{Chapter}

\newcommand{\cmt}[1]{{\footnotesize\textcolor{red}{#1}}}
\newcommand{\lekan}[2]{{\footnotesize\textcolor{purple}{LM: #1}}{#2}}
\newcommand{\gilbert}[2]{{\footnotesize\textcolor{blue}{GB: #1}}{#2}}
\newcommand{\audrey}[2]{{\footnotesize\textcolor{magenta}{AS: #1}}{#2}}
\newcommand{\ames}[2]{{\footnotesize\textcolor{pink}{AA: #1}}{#2}}
\newcommand{\todo}[1]{\textcolor{cyan}{TO-DO: #1}}
\newcommand{\review}[1]{\noindent\textcolor{red}{$\rightarrow$ #1}}
\newcommand{\response}[1]{\noindent{#1}}
\newcommand{\stopped}[1]{\color{red}STOPPED HERE #1\hrulefill}

\newcommand{\linkToPdf}[1]{\href{#1}{\blue{(pdf)}}}
\newcommand{\linkToPpt}[1]{\href{#1}{\blue{(ppt)}}}
\newcommand{\linkToCode}[1]{\href{#1}{\blue{(code)}}}
\newcommand{\linkToWeb}[1]{\href{#1}{\blue{(web)}}}
\newcommand{\linkToVideo}[1]{\href{#1}{\blue{(video)}}}
\newcommand{\linkToMedia}[1]{\href{#1}{\blue{(media)}}}
\newcommand{\award}[1]{\xspace} %

\newcounter{mnote}
\newcommand{\marginote}[1]{\addtocounter{mnote}{1}\marginpar{\themnote. \scriptsize #1}}
\setcounter{mnote}{0}
\newcommand{\ie}{i.e.\ }
\newcommand{\eg}{e.g.\ }
\newcommand{\cf}{cf.\ }
\newcommand{\yes}{\checkmark}
\newcommand{\no}{\ding{55}}

\newcommand{\flabel}[1]{\label{fig:#1}}
\newcommand{\seclabel}[1]{\label{sec:#1}}
\newcommand{\tlabel}[1]{\label{tab:#1}}
\newcommand{\elabel}[1]{\label{eq:#1}}
\newcommand{\alabel}[1]{\label{alg:#1}}
\newcommand{\fref}[1]{\cref{fig:#1}}
\newcommand{\sref}[1]{\cref{sec:#1}}
\newcommand{\tref}[1]{\cref{tab:#1}}
\newcommand{\eref}[1]{\cref{eq:#1}}
\newcommand{\aref}[1]{\cref{alg:#1}}

\newcommand{\bull}[1]{$\bullet$ #1}
\newcommand{\argmax}{\text{argmax}}
\newcommand{\argmin}{\text{argmin}}
\newcommand{\mc}[1]{\mathcal{#1}}
\newcommand{\bb}[1]{\mathbb{#1}}

\def\tidx{t}
\def\reline{\mathbb{R}}
\def\ren{\mathbb{R}^n}
\newcommand{\Note}[1]{}
\renewcommand{\Note}[1]{\hl{[#1]}}  


\def\kau{\mc{K}}
\def\particle{\bm{x}}
\def\materialresponse{\bm{G}}
\def\orthoggroup{{\textit{SO}}(3)}
\def\liegroup{{\mathbb{SE}}(3)}
\def\liealgebra{\mathfrak{se}(3)}
\def\identity{\bm{I}}
\def\adj{\text{ad}}
\def\Adj{\text{Ad}}
\def\pos{p}
\def\rot{R}
\def\vinput{u_s}
\def\ufast{u_f}
\def\abscissa{X}
\def\conf{\bm{g}}
\def\ones{\bm{1}}
\def\strain{\xi}
\def\twist{\eta}
\def\gencoord{q}
\def\hilbert{{H}}
\def\banach{B}
\def\pde{p.d.e.}
\def\cref{cf.\ }
\def\perturb{\epsilon}
\def\spacec{\textbf{C}}
\def\spacex{\textbf{X}}
\def\masscom{\mc{M}}
\def\masscore{\masscom^{\text{core}}}
\def\masspert{\masscom^{\text{pert}}}
\def\lyapweight{\phi}

	\begin{abstract}
	The characteristic ``in-plane" bending associated with soft robots' deformation make them preferred over rigid robots in sophisticated manipulation and movement tasks. Executing such motion strategies to precision in soft deformable robots and structures is however fraught with modeling and control challenges given their infinite degrees-of-freedom. Imposing \textit{piecewise constant strains} (PCS) across (discretized) Cosserat microsolids on the continuum material however, their dynamics become amenable to tractable mathematical analysis. While this PCS model handles the characteristic difficult-to-model ``in-plane" bending well, its Lagrangian properties are not exploited for control in literature neither is there a rigorous study on the  dynamic performance of multisection deformable materials  for ``in-plane" bending that guarantees steady-state convergence. In this sentiment, we first establish the PCS model's structural Lagrangian properties. Second, we exploit these for control on various strain goal states. Third, we benchmark our hypotheses against an Octopus-inspired robot arm under different constant tip loads. These induce non-constant ``in-plane" deformation and we regulate strain states throughout the continuum in these configurations. Our numerical results establish  convergence to  desired equilibrium throughout the continuum in all of our tests. Within the bounds here set, we conjecture that our methods can find wide adoption in the control of cable- and  fluid-driven  multisection soft robotic arms; and may be extensible to the (learning-based) control of deformable agents employed in simulated, mixed, or augmented reality.
	\newline 
	\newline 
	\textbf{Supplementary material} --- The codes for reproducing the experiments reported in this paper are available online: \href{https://github.com/robotsorcerer/dcm}{https://github.com/robotsorcerer/dcm}.
\end{abstract}

	\section{Introduction}
Soft robots are attracting wide adoption in the automation community owing to their improved bending, torsion, configurability, and compliance properties. These properties enable customizable solutions for assistive wearable devices~\cite{wearables1, wearables2}, robot grippers~\cite{grippers}, and mobile robots~\cite{Marchese}. 
In-plane bending, as a motion deformation strategy, is a particular advantage that serial soft arms execute better than rigid arms.  With this property, soft robot grippers can pick up delicate objects, adopt adept nonlinear motion strategies in complex workspaces \eg for reaching tasks. In embodied applications such as soft exogloves~\cite{ConorSoroGlove} and exosuits~\cite{ConorLamb}, they provide conformal deformation that aids improved walking gaits, and muscle rehabilitation and they conserve energy economy. 

The piecewise constant strain (PCS) Lagrangian dynamics derived by~\cite{RendaTRO18} is a reduced special Euclidean group-3 ($\liegroup$)  model for  compliant, serial  manipulators. Derived from Cosserat~\cite{CosseratBrothers} rod theory, it addresses torsion, in-plane, and out-of-plane (multi-) bending motions. 
The PCS model outperforms the common piecewise constant curvature (PCC)~\cite{RSSPCC} and the constant curvature variants~\cite{IanWalkerDiffKine} used outside of finite element modeling methods (FEM)~\cite{FEM}. Controllers deployed on the PCS model however ignore the geometric properties of the Lagrangian dynamics~\cite{RendaStatics, ThuruthelSoRo}. This makes the control equations complicated and destroys several properties of the Lagrangian dynamics  that are useful for control tasks.

\begin{figure}[tb!]
	\centering
	\begin{minipage}[b]{.5\textwidth}
		\includegraphics[angle=0,width=\textwidth]{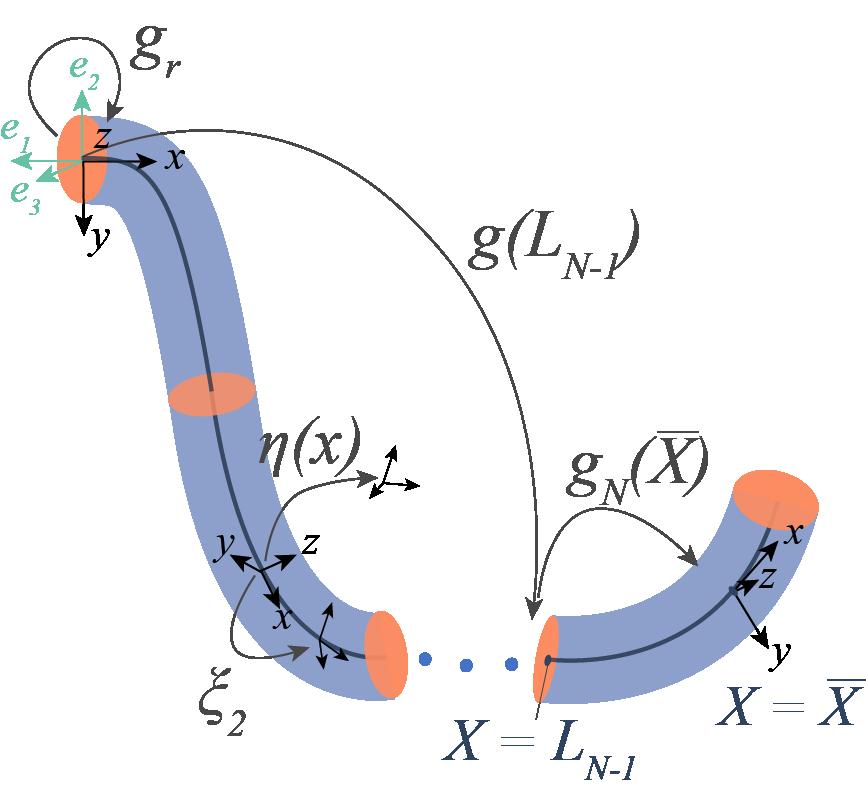} 
	\end{minipage} 
	\caption{Schematic of the configuration of an Octopus robot arm.}
\label{fig:pcs_kine}
\end{figure}
%

In this sentiment, considering the PCS model for soft multisection arms, we first establish the  structural properties of their Lagrangian dynamics. We then exploit these for the steady-state convergence analyses in various control proposals (under different operating model parameters)  and in different surrounding mediums using the well-established method of Lyapunov analysis. The Octopus robot~\cite{CeciliaOctopus}, a muscular hydrostat~\cite{e-book_on_morphological_computation_2014} with distributed deformation throughout its articulated arms, blends the interplay between continuum mechanics and sensorimotor control  well. Thus, we consider a single arm of the CyberOctopus~\cite{HierarchicalGazzolla} (configuration shown in \autoref{fig:pcs_kine}) to benchmark our controllers \textit{on the characteristic difficult-to-model-and-control in-plane bending deformation }. 

The rest of this paper is structured as follows: in \S \ref{sec:prelims}, we briefly introduce preliminaries relevant to the contributions machinery we present in  \S \ref{sec:structure}. Multivariable stabilizing feedback controllers for regulating the tip point and strain states are established in \S \ref{sec:pd_pid_controls}. We present numerical results in \S \ref{sec:results} and conclude the paper in \S \ref{sec:conclude}. 
	\section{Notations and Preliminaries}
\label{sec:prelims}
%
%
The strain field and strain twist vectors are respectively $\strain \in \reline^6$ and $\twist \in \bb{R}^6$.  The Lie algebra of  $\liegroup$ is  $\liealgebra$. 
All rotation are measured counterclockwise in corkscrew sense.  The continuum is reduced to a space curve defined by a material abscissa $\abscissa: [0, L]$, where $L$ is the robot's length in the reference configuration. 
For a configuration $\conf(\abscissa) \in \liegroup$, its adjoint and coadjoint representations are respectively $\Adj_{\conf}$, $\Adj_{\conf}^\star$ --- parameterized by a curve, the material abscissa $\abscissa$. The corresponding adjoint and coadjoint representation of the strain twist vector are $\adj_{\strain, \twist}$ and $\adj_{\strain, \twist}^\star$, respectively. In generalized coordinate, the joint vector of a soft robot is denoted $\gencoord(\strain) = [\strain_1^\top, \ldots, \strain_{n_\strain}^\top]^\top \in \reline^{6{n_\strain}}$. 

%
\subsection{SoRo Configuration}
%
The inertial frame is signified by the basis triad $(\bm{e}_1, \bm{e}_2, \bm{e}_3)$ (see \autoref{fig:pcs_kine}) and $\conf_r(X)$ is the transformation from the inertial to the manipulator's base frame.  For cable-driven arms, the point at which actuation occurs is labeled $\bar{\abscissa}$. The configuration matrix that parameterizes curve $X$ of length $L_n$ is denoted $\conf_{L_n}$. The cable runs through the $z$-axis in the (micro) body frame (x-axis in the spatial frame). As shown in \autoref{fig:pcs_kine}, 
the transformation from the base to the inertial frame is
\begin{align}
	\conf_r = \left(\begin{array}{cccc}
		0 & -1 & 0 & 0 \\
		1 & 0 & 0 & 0 \\
		0 & 0 & 1 & 0 \\
		0 & 0 & 0 & 1
	\end{array}\right).
	\label{eq:conf_ref}
\end{align} 
We adopt a linear viscoelastic constitutive model based on Kelvin-Voight internal forces \ie 
\begin{align}
	\mc{F}_i(\abscissa) = \Sigma(\xi - \xi_0) + \Upsilon \dot{\strain},
\end{align} 
with $\Sigma$ being the constant screw stiffness matrix,  $\Upsilon$ the viscosity matrix, and  the reference strain $\xi_0 = [0,0,0, 1,0,0]^\top$ in the upright configuration. The jacobian associated with $\abscissa$ is $J(\abscissa)$. Owing to paper length constraints we refer readers to~\citet{RendaTRO18} for further  definitions.
\subsection{Continuous Strain Vector and Twist Velocity Fields}
Let $\pos (X)$ describe the position vector of a microsolid on a soft body and let $\rot(X)$ denote the corresponding orientation matrix. Furthermore, let the pose $[\pos(X), \rot(X)]$ be parameterized by a material abscissa $\abscissa \in [0, L]$ for end nodes $0$ and $L$. Then, the robot's C-space, parameterized by a curve $g(\abscissa): \abscissa \rightarrow \liegroup$, is $
g(\abscissa) = \left(\begin{array}{cc}
\rot(\abscissa) & \pos(\abscissa) \\
0^\top & 1
\end{array}\right)$.
%
%
Suppose that $\varepsilon(\abscissa) \in \reline^3$ and $\gamma(\abscissa) \in \reline^3$ respectively denote the linear and angular strain components of the soft arm. Then, the arm's strain field is a state vector along the curve $g(\abscissa)$ defined as $\breve{\xi}(\abscissa) = g^{-1} \partial g/\partial \abscissa\triangleq g^{-1} \partial_x g$, and the velocity of $g(\abscissa)$ is the twist vector field $\breve{\eta}(\abscissa)$ defined as $\breve{\eta}(\abscissa) = g^{-1} \partial g / \partial t \triangleq g^{-1} \partial_t{g}$.
%
%
%
Read $\hat{\gamma}$: the anti-symmetric matrix representation of $\gamma$. Read $\breve{\xi}$: the isomorphism from $\xi \in \reline^6$, to its matrix  representation in  $\liealgebra$. 

\subsection{Discrete Cosserat System: Cosserat-Constitutive PDEs}
The PCS model assumes that the respective strain, $\strain_i$ for $n$ microsolid sections, $\{\mc{M}_i\}_{i=1}^n$, are constant for each discretized section of the arm. 
Using d'Alembert's principle, the generalized dynamics for the PCS model (\cf \autoref{fig:pcs_kine}) under external and actuation loads admits the weak form~\cite{RendaTRO18}
\begin{align}
&\underbrace{\left[\int_0^{L_N} J^T \mc{M}_a
	J d\abscissa\right]}_{M(q)}\ddot{q} + \underbrace{\left[\int_0^{L_N} J^T \adj_{J\dot{q}}^\star\mc{M}_a
	J d\abscissa\right]}_{C_1(q, \dot{q})}\dot{q} + \nonumber \\
&\underbrace{\left[\int_0^{L_N} J^T \mc{M}_a
	\dot{J} d\abscissa\right]}_{C_2(q, \dot{q})}\dot{q}
+ \underbrace{\left[\int_0^{L_N} J^T \mc{D}
	{J} \|J \dot{q}\|_pd\abscissa\right]}_{D(q, \dot{q})}\dot{q} \nonumber\\
& -\underbrace{(1-\rho_f/\rho) \left[\int_0^{L_N} J^T \mc{M} \Adj_{\conf}^{-1} d\abscissa\right]}_{N(q)}\Adj_{\conf_r}^{-1}\mc{G} - \underbrace{J(\bar{\abscissa})^T \mc{F}_p}_{F(q)} \nonumber \\
& - \underbrace{\int_0^{L_N} J^T \left[\nabla_x\mc{F}_i - \nabla_x\mc{F}_a + \adj_{\twist_n}^\star \left(\mc{F}_i - \mc{F}_a\right)\right]}_{\tau(q)}d\abscissa = 0,
\label{eq:loads_full_form}
\end{align}
where $\mc{F}_i(\abscissa)\triangleq \left(M_i^T, F_i^T \right)^T\equiv \partial_\abscissa \mathfrak{U}$ is the wrench of internal forces, $\bar{\mc{F}}_a(\abscissa)$ 
is the \textit{distributed} wrench of  actuation loads, and $\bar{\mc{F}}_e(\abscissa)$ 
is the external \textit{distributed} wrench of the applied forces. 
The screw mass inertia matrix  $\mc{M}(X) = \text{diag}\left(I_x, I_y, I_z, A, A, A\right)\rho$ for a body density $\rho$, sectional area $A$, bending, torsion, and second inertia operator $I_x, I_y, I_z$ respectively. 
In \eqref{eq:loads_full_form}, $\mc{M}_a = \mc{M} + \mc{M}_f$ is a lumped sum of the microsolid mass inertia operator, $\mc{M}$, and that of the added mass fluid, $\mc{M}_f$; $dX$ is the length of each section of the multi-robot arm; $\mc{D}(\abscissa)$ is the drag  matrix; $J(\abscissa)$ is the Jacobian operator; $\|\cdot\|_p$ is the translation norm of the expression contained therein; $\rho_f$ is the density of the fluid in which the material moves; $\rho$ is the body density; $\mc{G} = \left[0, 0, 0, -9.81, 0, 0\right]^T$ is the gravitational vector; and $\mc{F}_p$ is the applied wrench at  $\bar{\abscissa}$. In standard Newton-Euler form, equation \eqref{eq:loads_full_form}, is
\begin{align}
\begin{split}
M(q) \ddot{q} &+ \left[C_1 (q, \dot{q}) + C_2(q, \dot{q})\right]\dot{q} = \tau(q) + F(q) \\ 
& \qquad \qquad + N(q)\Adj_{\conf_r}^{-1} \mc{G} - D(q, \dot{q}) \dot{q}.
\end{split}
\label{eq:newton-euler}
\end{align}
%

	\section{Structural Properties of the PCS Model}
\label{sec:structure}
We now establish the Lagrangian properties. 

\begin{theorem}[Structural properties of the kinetic equation]
	Equation \eqref{eq:newton-euler} satisfies the following properties:
	\begin{property}[Positive definiteness of the Inertia Operator]
		The inertia tensor $\mc{M}_a(\gencoord)$ is symmetric and positive definite. As a result $M(\gencoord)$ is symmetric and positive definite. 
		\label{ppty:psd}
	\end{property}
	\begin{proof}[Proof of Property \ref{ppty:psd}]		
		The jacobian, $J$, is injective by ~\cite[equation 20]{RendaTRO18}. Thus, property \ref{ppty:psd} follows from its definition. 
	\end{proof}
	\begin{property}[Boundedness of the Mass Matrix]
		The mass inertial matrix $M(\gencoord)$ is uniformly bounded from below by $m\identity_n$ where $m$ is a positive constant. 
		\label{ppty:bdedness}
	\end{property}
	\begin{proof}[Proof of Property \ref{ppty:bdedness}]
		This is a restatement of the lower boundedness of $M(\gencoord)$ for fully actuated n-degrees of freedom manipulators~\cite{Ortega2014}.
	\end{proof}
	\begin{property}[Skew symmetric property]
		The matrix 
		$	\dot{M}(\gencoord) - 2 \left[C_1(\gencoord, \dot{\gencoord}) + C_2(\gencoord, \dot{\gencoord})\right]$
		is skew-symmetric.
		\label{ppty:skewsem}
	\end{property}
	\begin{proof}[Proof of Property \ref{ppty:skewsem}]
		We have by Leibniz's rule that 
		\begin{align}
			\dot{M}(\gencoord) &= \dfrac{d}{dt}\left(\int_{0}^{L_N} J^T \mc{M}_a J d\abscissa\right) = \int_{0}^{L_N} \dfrac{\partial}{\partial t} \left(J^T \mc{M}_a J \right)d\abscissa \nonumber \\
			&\triangleq \int_{0}^{L_N} \left(\dot{J}^T \mc{M}_a J  + J^T\dot{\mc{M}}_a J  +  {J}^T \mc{M}_a \dot{J}\right) d\abscissa.
			\label{eq:mass_skewsem}
		\end{align}
		Therefore, $\dot{M}(\gencoord) - 2 \left[C_1(\gencoord, \dot{\gencoord}) + C_2(\gencoord, \dot{\gencoord})\right]$ becomes
		\begin{align}
			&\int_{0}^{L_N} \left(\dot{J}^\top \mc{M}_a J  + J^\top\dot{\mc{M}}_a J  +  {J}^\top \mc{M}_a \dot{J}\right) d\abscissa \nonumber \\
			&- 2\int_{0}^{L_N}  \left(J^\top \adj^\star_{J\dot{\gencoord}} \mc{M}_a J + J^\top \mc{M}_a \dot{J}\right) d\abscissa \\
			&\triangleq\int_{0}^{L_N} \left(\dot{J}^\top \mc{M}_a J  + J^\top\dot{\mc{M}}_a J  -  {J}^\top \mc{M}_a \dot{J}\right) d\abscissa \nonumber \\
			&\qquad - 2\int_{0}^{L_N} J^\top \adj^\star_{J\dot{\gencoord}} \mc{M}_a J d\abscissa.
			\label{eq:skewsem_lhs}
		\end{align}
		Similarly, $-\left[\dot{M}(\gencoord) - 2 \left[C_1(\gencoord, \dot{\gencoord}) + C_2(\gencoord, \dot{\gencoord})\right]\right]^\top$ expands as 
		\begin{align}
			&-\dot{M}^\top(\gencoord) + 2 \left[C_1^\top(\gencoord, \dot{\gencoord}) + C_2^\top(\gencoord, \dot{\gencoord})\right] = \nonumber\\
			& \int_{0}^{L_N} d\abscissa^\top \left(-J^\top\mc{M}_a \dot{J}   - J^\top\dot{\mc{M}}_a J  -  \dot{J}^\top \mc{M}_a {J}\right)  \nonumber \\
			&\qquad + 2\int_{0}^{L_N}  d\abscissa^\top\left(J^\top \mc{M}_a \adj_{J\dot{\gencoord}} {J} + \dot{J}^\top \mc{M}_a J\right)   \nonumber\\ 
			%
			&\triangleq \int_{0}^{L_N} \left({J}^\top \mc{M}_a \dot{J}-\dot{J}^\top \mc{M}_a J  - J^\top\dot{\mc{M}}_a J  \right) d\abscissa \nonumber \\
			&\qquad - 2\int_{0}^{L_N} J^\top \adj^\star_{J\dot{\gencoord}} \mc{M}_a J d\abscissa
			\label{eq:skewsem_rhs}
		\end{align}
		where the terms in equation \eqref{eq:skewsem_rhs} follow from the symmetry of the matrices that constitute the integrands. Inspecting \eqref{eq:skewsem_lhs} and  \eqref{eq:skewsem_rhs}, it is easy to see that their right hand sides verify the identity
		\begin{align}
			&\int_{0}^{L_N} \left(\dot{J}^\top \mc{M}_a J  + J^\top\dot{\mc{M}}_a J  -  {J}^\top \mc{M}_a \dot{J}\right) d\abscissa \nonumber \\
			&\quad - 2\int_{0}^{L_N} J^\top \adj^\star_{J\dot{\gencoord}} \mc{M}_a J d\abscissa = 2\int_{0}^{L_N} J^\top \adj^\star_{J\dot{\gencoord}} \mc{M}_a J d\abscissa - \nonumber \\
			&\qquad \int_{0}^{L_N} \left({J}^\top \mc{M}_a \dot{J}-\dot{J}^\top \mc{M}_a J  - J^\top\dot{\mc{M}}_a J  \right) d\abscissa  
		\end{align}
		or 
		\begin{align}
			\dot{M}(\gencoord) &- 2 \left[C_1(\gencoord, \dot{\gencoord}) + C_2(\gencoord, \dot{\gencoord})\right] = \nonumber \\
			&-\left[\dot{M}(\gencoord) - 2 \left[C_1(\gencoord, \dot{\gencoord}) + C_2(\gencoord, \dot{\gencoord})\right]\right]^\top.
		\end{align}
		
		\textit{A fortiori}, the skew symmetric property follows.
	\end{proof}	
	\begin{remark}
		Since $\dot{M}(\gencoord)$ is symmetric (cref. \eqref{eq:mass_skewsem}), another way of stating the skew-symmetric property is to write %
		\begin{align}
			\dot{M}(\gencoord) = C_1(\gencoord, \dot{\gencoord}) + C_2(\gencoord, \dot{\gencoord}) + \left[C_1(\gencoord, \dot{\gencoord}) + C_2(\gencoord, \dot{\gencoord})\right]^\top.
		\end{align}
		Owing to the symmetry of the right-hand-side (rhs), we have $\dot{M} = 2(C_1+ C_2)$.
		\label{rem:skewsem}
	\end{remark}
	\begin{property}[Linearity-in-the-parameters]
		There exists a constant vector $\Theta \in \bb{R}^{l}$ and an $N\times l$ dimensional regressor function $Y(\gencoord, \dot{\gencoord}, \ddot{\gencoord}) \in \bb{R}^{N\times l}$ such that
		\begin{align}
			\begin{split}
				M(q) \ddot{q} &+ \left[C_1 (q, \dot{q}) + C_2(q, \dot{q}) +  D(q, \dot{q})\right]\dot{q} - F(q) -  \\ 
				& \qquad \qquad + 
				N(q)\Adj_{\conf_r}^{-1} \mc{G} = Y(\gencoord, \dot{\gencoord}, \ddot{\gencoord}) \Theta.
			\end{split}
		\end{align}
		\label{ppty:linearity}
	\end{property}
	%
	\begin{proof}[Proof of Property \ref{ppty:linearity}]
		Consider the generalized constitutive law for the full Cosserat model 
		derived in~\cite[\S 6.3]{Boyer2017}. 
		The reduced Lagrangian density in $\liealgebra$ per unit of deformed volume (for all configurations)\footnote{Note that the full Lagrangian density of the soft multisection manipulator is $L_m = \int_0^L \mathfrak{L}(\conf, \twist, \strain) dX$.} for a configuration $\mc{B}$ is  $	\mathfrak{L} = \mathfrak{T} - \mathfrak{U}$~\cite{Boyer2017} where 	$\mathfrak{T}, \mathfrak{U}$ respectively denote the volume's left-reduced kinetic and elastic potential energy densities in $\mc{B}$. From the Euler-Lagrange equation, we have 
		\begin{align}
			  \tau_n=\dfrac{d}{dt}\dfrac{\partial \mathfrak{T}}{\partial \dot{\strain}_n} - \dfrac{\partial\mathfrak{T}}{\partial \strain_n}  + \dfrac{\partial \mathfrak{U}}{\partial\mathfrak{\twist}_n}, \quad n=1,\ldots,N.
			\label{eq:euler_lagrange}
		\end{align}
		Suppose that the material mid-surface crosses the microstructures $\mc{M}_i$ which correspond to the mass center, then the kinetic energy density per unit of deformed area and its rate of change w.r.t $\strain$ are~\cite{Boyer2017}
		\begin{align}
			\mathfrak{T}(\strain) = \frac{1}{2}\bigg\langle \left(\begin{array}{c}
				\omega \\ \nu,
			\end{array}\right), \left(\begin{array}{c}
			\bar{I}  \omega \\ \bar{\rho} \nu 
			\end{array}\right)\bigg\rangle \\
			\mathfrak{T}(\dot{\strain}) = \frac{1}{2}\bigg\langle \left(\begin{array}{c}
				\dot{\omega} \\ \dot{\nu},
			\end{array}\right), \left(\begin{array}{c}
				\bar{I}  \dot{\omega} \\ \bar{\rho} \dot{\nu} 
			\end{array}\right)\bigg\rangle 
		\end{align}
		where $\bar{\rho}$ and $\bar{I}$ respectively denote the mass and angular inertia density per unit volume. It follows that 
		\begin{align}
			\partial_\strain \mathfrak{T} = \left(\begin{array}{c}
				\bar{I}  \omega \\ \bar{\rho} \nu 
			\end{array}\right), \quad 
			\partial_{\dot{\strain}} \mathfrak{T} = \left(\begin{array}{c}
			\bar{I}  \dot{\omega} \\ \bar{\rho} \dot{\nu} 
			\end{array}\right).
			\label{eq:lagrangian_kinetic}
		\end{align}
		In a similar vein, the left invariant density of internal energy $\mathfrak{U}$ is~\cite{Boyer2010}
		\begin{align}			
			\mathfrak{U}(\twist) =  \langle \mc{F}_{int}, (\twist - \twist^d) \rangle
		\end{align}
		for a desired $\twist^d$ and field of internal force constraints $\mc{F}_{int}: \abscissa \in [0, L] \rightarrow \mc{F}_{int}(\abscissa) \in \liealgebra$.
		The potential energy per unit of metric area of the deformed surface (assuming that it is concentrated in the mid-surface) is~\cite{Boyer2017}
		\begin{align}
			{\partial_\twist \mathfrak{U}} = \left(\begin{array}{c}
				\partial \mathfrak{U}/\partial \gamma \\ \partial \mathfrak{U}/\partial \varepsilon
			\end{array}\right) - \left(\begin{array}{c}
				0 \\ \varepsilon
			\end{array}\right) \mathfrak{L}
			\label{eq:lagrangian_potential}
		\end{align}
		so that the Euler-Lagrange equation \eqref{eq:euler_lagrange} becomes
		\begin{align}
			\left(\begin{array}{c}
			\bar{I}  \ddot{\omega} + \dot{\bar{I}} \dot{\omega}\\ \bar{\rho} \ddot{\nu} + \dot{\bar{\rho}} \dot{\nu} 
			\end{array}\right) - \left(\begin{array}{c}
			\bar{I}  \omega \\ \bar{\rho} \nu 
			\end{array}\right) + \left(\begin{array}{c}
			\partial \mathfrak{U}/\partial \gamma \\ \partial \mathfrak{U}/\partial \varepsilon
			\end{array}\right) - \left(\begin{array}{c}
			0 \\ \varepsilon
			\end{array}\right) \mathfrak{L}. 
		\end{align}
		%
\noindent	\textbf{Observe}:  Under the PCS assumption, each microsolid is fixed so that the energy density per unit section of metric volume is akin to that of  a rigid body.  The kinetic and potential energies for the PCS model per section $i$ of $N$ sections  becomes 
		\begin{align}
			\mathfrak{T} = \frac{1}{2}\sum_{i=1}^N \bigg\langle \left(\begin{array}{c}
				^{i+1}\omega_i \\ ^{i+1}\nu_i,
			\end{array}\right), \left(\begin{array}{c}
				^{i+1}\bar{I}_i \,\, ^{i+1}\omega_i \\ ^{i+1}\bar{\rho}_i \,\,^{i+1}\nu_i
			\end{array}\right)\bigg\rangle
		\end{align} 
		where $^{i+1}\omega_i$ is the angular velocity of section $i+1$ in the frame of section $i$, $^{i+1}\nu_i$ is the linear velocity of section $i+1$ in the frame of section $i$ e.t.c. Similarly, the sectional potential energies are
		\begin{align}
			\mathfrak{U}(\twist_i) = \sum_{i=1}^N \langle \{\mc{F}_{int}\}_i, (\twist_i - \twist_i^d) \rangle.
		\end{align} 
	Thus, the kinetic and potential energy are each linear in configuration parameters so that 
	\begin{align}
		\mathfrak{T} = \sum_{i=1}^{N} \dfrac{\partial \mathfrak{T}}{\partial \Sigma_i}\Sigma_i = \sum_{i=1}^{N} \Gamma \mathfrak{T}_i \Sigma_i, \nonumber\\
		\mathfrak{U} = \sum_{i=1}^{N} \dfrac{\partial \mathfrak{U}}{\partial \Sigma_i}\Sigma_i = \sum_{i=1}^{N} \Gamma \mathfrak{U}_i \Sigma_i
		\label{eq:lip}
	\end{align}
	where $\Sigma_i$ is an inertial parameter, $\Gamma \mathfrak{T}_i$ is a function of $\gencoord, \dot{\gencoord}$ and $\Gamma \mathfrak{U}_i$ is a function of $\gencoord$.  	Using \eqref{eq:lip} and plugging \eqref{eq:lagrangian_kinetic} and \eqref{eq:lagrangian_potential} into \eqref{eq:euler_lagrange}, we conclude that the sectionalized piecewise Cosserat dynamics is also linear-in-the-inertial-parameters, given as $\tau(\gencoord) = Y(\gencoord, \dot{\gencoord}, \ddot{\gencoord}) \Theta$,
	%
	where $Y(\gencoord, \dot{\gencoord}, \ddot{\gencoord})$ is the matrix function of $\gencoord, \dot{\gencoord}, \ddot{\gencoord}$ and $\Theta$ is the matrix of parameters.
	\end{proof}
\end{theorem}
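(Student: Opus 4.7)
The plan is to address the four properties in order, leveraging the integral form of $M(\gencoord)$, $C_1(\gencoord,\dot{\gencoord})$, $C_2(\gencoord,\dot{\gencoord})$ in \eqref{eq:loads_full_form} together with the pointwise symmetry and positivity of the screw inertia $\mc{M}_a = \mc{M} + \mc{M}_f$.

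For Property 1, I would observe that $\mc{M}_a$ is pointwise symmetric positive definite because both $\mc{M}$ and $\mc{M}_f$ are block-diagonal with strictly positive mass and inertia scalars. Hence the integrand $J^\top \mc{M}_a J$ is symmetric positive semidefinite at every abscissa. Since the PCS Jacobian is injective on the joint space (the fact cited from \cite{RendaTRO18}), the quadratic form $\gencoord^\top J(\abscissa)^\top \mc{M}_a J(\abscissa) \gencoord$ is strictly positive at every $\abscissa$ where the PCS basis functions do not collectively vanish, and integrating over the compact interval $[0, L_N]$ preserves both symmetry and strict positivity. Property 2 is then an immediate consequence: by compactness, the continuous map $\abscissa \mapsto \lambda_{\min}(J^\top \mc{M}_a J)$ attains a positive minimum on $[0, L_N]$, yielding $M(\gencoord) \succeq m\,\identity_n$ for some $m > 0$, which is the standard rigid-multibody lower bound.

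The main technical step is Property 3. I would compute $\dot{M}(\gencoord)$ by differentiating under the integral to produce the three terms $\dot{J}^\top \mc{M}_a J$, $J^\top \dot{\mc{M}}_a J$, and $J^\top \mc{M}_a \dot{J}$. The identity driving the proof is the coadjoint--adjoint duality $\langle \adj^\star_{J\dot{\gencoord}} y, z \rangle = -\langle y, \adj_{J\dot{\gencoord}} z \rangle$ on $\liealgebra$, combined with the pointwise symmetry of $\mc{M}_a$ and $\dot{\mc{M}}_a$ (both diagonal). Transposing $\dot{M} - 2(C_1 + C_2)$ and applying this duality to the $C_1$ piece, while swapping the $\dot{J}^\top \mc{M}_a J$ and $J^\top \mc{M}_a \dot{J}$ contributions coming from $\dot{M}$ against those from $C_2$, should produce exactly $-(\dot{M} - 2(C_1 + C_2))$. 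The hard part is bookkeeping rather than concept: one must carefully track which coadjoint terms arise on the right versus the left of $\mc{M}_a$, and confirm that the $\dot{\mc{M}}_a$ contribution cancels against itself after transposition so the symmetric remainder is zero.

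For Property 4, my plan is to return to the Lagrangian origin of \eqref{eq:newton-euler}. Under the PCS assumption each microsolid is rigid on its section, so the section-wise kinetic energy is bilinear in the section twists with coefficients that are the scalar inertial parameters ($\bar{I}_i$, $\bar{\rho}_i$, sectional area, etc.), and the elastic potential, viscous dissipation, gravity, and fluid-drag contributions similarly pair configuration- and velocity-dependent functions linearly against $\Sigma$, $\Upsilon$, $\rho_f/\rho$, and the drag coefficients. Stacking these scalar parameters into $\Theta$ and collecting their coefficients produces the regressor $Y(\gencoord, \dot{\gencoord}, \ddot{\gencoord})$. Since the Euler--Lagrange operator is itself linear in the Lagrangian, linearity in $\Theta$ is preserved. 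The only subtlety is ensuring that the added-mass, drag, and applied-wrench terms can all be absorbed into the same linear parametric form, which follows from their individual linearity in the relevant geometric and material constants.
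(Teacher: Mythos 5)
Your overall route is the same as the paper's for all four properties: pointwise symmetry and positivity of $\mc{M}_a$ plus injectivity of $J$ for Properties 1--2, differentiation under the integral followed by transposition and adjoint/coadjoint duality for Property 3, and a return to the sectionwise rigid-body Lagrangian under the PCS assumption for Property 4. Two points, however, do not go through as written. First, in Property 2 you claim that $\abscissa \mapsto \lambda_{\min}\left(J^\top \mc{M}_a J\right)$ attains a positive minimum on $[0, L_N]$; this is false pointwise, since $J(\abscissa)$ maps the $6N$-dimensional joint space into $\reline^6$, so the integrand $J^\top \mc{M}_a J$ has rank at most $6$ and its smallest eigenvalue is identically zero whenever $N>1$. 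The strict lower bound $M(\gencoord)\succeq m\identity_n$ can only come from the \emph{integral} (the ranges of $J(\abscissa)^\top$ over all sections jointly spanning the joint space), and uniformity of $m$ over $\gencoord$ still needs an argument (boundedness of the admissible strain set or of $J$); the paper sidesteps this by citing the standard manipulator result, which is the honest thing to do here as well.

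Second, for Property 3 you correctly identify the mechanism --- transpose, use the duality $\langle \adj^\star_{J\dot{\gencoord}} y, z\rangle = -\langle y, \adj_{J\dot{\gencoord}} z\rangle$, and cancel the $\dot{J}^\top\mc{M}_a J$ versus $J^\top\mc{M}_a\dot{J}$ contributions --- but you defer the actual cancellation to ``bookkeeping,'' and that bookkeeping is exactly where the content lies: with your sign convention the transposed $C_1$ term becomes $-\int J^\top\mc{M}_a\adj_{J\dot{\gencoord}}J\,d\abscissa$, and matching it against $+\int J^\top\adj^\star_{J\dot{\gencoord}}\mc{M}_a J\,d\abscissa$ requires either a convention with $\adj^\star_\eta=\adj^\top_\eta$ (as the paper implicitly uses) or retreating to the weaker but sufficient statement that the quadratic form vanishes, $\dot{\gencoord}^\top\left(\dot{M}-2\breve{C}\right)\dot{\gencoord}=0$, which follows cleanly from $\adj_\eta\,\eta=0$ applied to $\eta=J\dot{\gencoord}$. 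Since the Lyapunov arguments downstream only use the quadratic-form version, stating and proving that version would close your gap with less bookkeeping than the full matrix identity.
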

	\section{Multivariable Control}
\label{sec:pd_pid_controls}
%
%
\begin{figure}[tb!]
	\centering
	\includegraphics[width=.95\columnwidth, height=.5\columnwidth]{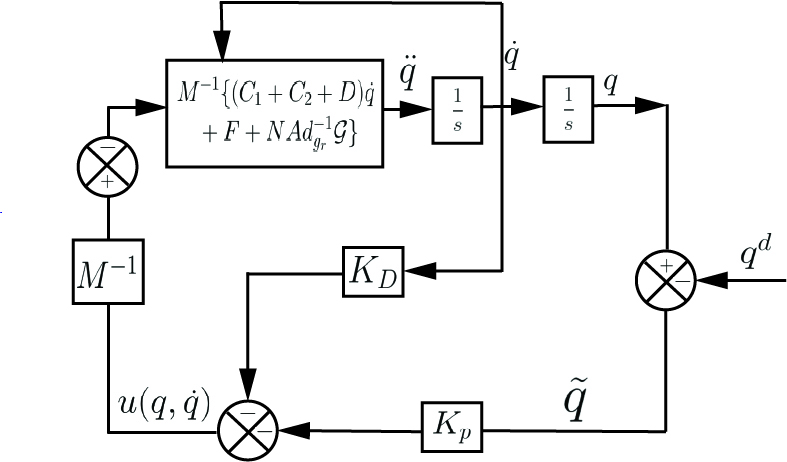}
	\caption{\footnotesize{PD computed torque control block diagram}.}
	\label{fig:block_diag}
\end{figure}
We will control the robot's dynamics by exploiting the properties of \S \ref{sec:structure}. 


 \subsection{Proportional-(Integral)-Derivative Control, \textit{Redux}.}
%
We first show that regarding the generalized torque $\tau(\gencoord)$ as a control input, $u(\gencoord, \dot{\gencoord})$, linear feedback laws are (almost) sufficient for attaining a desired joint configuration (without gravity compensation) in the control law. 
%
\begin{theorem}[Cable-driven Actuation]
	For a set of gains $K_D$ and $K_p$ which are damping and positive diagonal matrices, the control law  (without gravity/buoyancy compensation) 
	\begin{align}
		u(\gencoord, \dot{\gencoord}) = -K_p \tilde{\gencoord} -K_D \dot{\gencoord} - F(\gencoord) 
		\label{eq:pd_controller_no_grav}
	\end{align} 
	under a cable-driven actuation globally asymptotically stabilizes (GAS) system \eqref{eq:newton-euler}, where $\tilde{\gencoord}(t) = \gencoord(t) - \gencoord^d$ is the joint error vector for a desired equilibrium point in space $\gencoord^d$.
\end{theorem}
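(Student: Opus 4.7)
The plan is to prove GAS via Lyapunov's direct method combined with LaSalle's invariance principle, using the structural results of Theorem 1 as the workhorses. First, I would propose the standard mechanical-energy Lyapunov candidate
$V(\tilde{\gencoord},\dot{\gencoord}) = \frac{1}{2}\dot{\gencoord}^\top M(\gencoord)\dot{\gencoord} + \frac{1}{2}\tilde{\gencoord}^\top K_p \tilde{\gencoord}$,
which is positive-definite by Property \ref{ppty:psd} and the diagonal positivity of $K_p$, and radially unbounded thanks to Property \ref{ppty:bdedness}. Radial unboundedness is what will upgrade local stability to \emph{global} asymptotic stability in the final step.

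Next, I would substitute the control law \eqref{eq:pd_controller_no_grav} into the Newton--Euler form \eqref{eq:newton-euler}. The $-F(\gencoord)$ component of $u$ is chosen precisely to cancel the cable wrench $F(\gencoord)$ on the right-hand side of \eqref{eq:newton-euler}, leaving the closed-loop equation $M\ddot{\gencoord} + (C_1+C_2)\dot{\gencoord} + (D+K_D)\dot{\gencoord} + K_p \tilde{\gencoord} = N(\gencoord)\Adj_{\conf_r}^{-1}\mc{G}$. Differentiating $V$ along trajectories and using $\dot{\tilde{\gencoord}}=\dot{\gencoord}$ (since $\gencoord^d$ is a constant setpoint), the Coriolis-like contribution $\frac{1}{2}\dot{\gencoord}^\top \dot{M}\dot{\gencoord} - \dot{\gencoord}^\top(C_1+C_2)\dot{\gencoord}$ vanishes by the skew-symmetry identity of Property \ref{ppty:skewsem} (equivalently Remark \ref{rem:skewsem}). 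The symmetric cross-terms $\pm \dot{\gencoord}^\top K_p \tilde{\gencoord}$ cancel, leaving $\dot{V} = -\dot{\gencoord}^\top(K_D+D)\dot{\gencoord} + \dot{\gencoord}^\top N(\gencoord)\Adj_{\conf_r}^{-1}\mc{G}$.

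For the Octopus arm operating in its aquatic medium at (near-)neutral buoyancy, the factor $(1-\rho_f/\rho)$ inside $N(\gencoord)$ vanishes, so the residual gravity contribution is identically zero and $\dot{V} = -\dot{\gencoord}^\top(K_D+D)\dot{\gencoord}\le 0$: $K_D$ is a positive-definite damping matrix, and the drag operator $D(\gencoord,\dot{\gencoord})$ yields a nonnegative quadratic form by virtue of its $\|J\dot{\gencoord}\|_p$-weighted structure. LaSalle's invariance principle then closes the argument: on $\{\dot{V}=0\}=\{\dot{\gencoord}=0\}$ the closed-loop equation collapses to $M(\gencoord)\ddot{\gencoord} = -K_p \tilde{\gencoord}$; positive-definiteness of $M$ (Property \ref{ppty:psd}) together with that of $K_p$ forces $\tilde{\gencoord}\equiv 0$ on the largest invariant subset of $\{\dot{\gencoord}=0\}$, and radial unboundedness of $V$ promotes this to global asymptotic stability of the origin $(\tilde{\gencoord},\dot{\gencoord})=(0,0)$.

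The main obstacle I expect is the gravity/buoyancy wrench $N(\gencoord)\Adj_{\conf_r}^{-1}\mc{G}$, which the chosen PD-with-cable-feedforward law does not absorb. The cleanest resolution --- and the one implicit in the statement's ``(almost) sufficient'' qualifier --- is to appeal to the neutral-buoyancy regime so that $N\equiv 0$; absent that, the same Lyapunov argument only guarantees convergence to a shifted equilibrium $\gencoord^\star$ solving $K_p(\gencoord^\star-\gencoord^d) = N(\gencoord^\star)\Adj_{\conf_r}^{-1}\mc{G}$, which approaches $\gencoord^d$ as the smallest eigenvalue of $K_p$ grows. A secondary technicality worth verifying is that the $\|J\dot{\gencoord}\|_p$-dependent drag $D$ contributes nonnegatively to $\dot{V}$ along trajectories, which follows directly from its defining integral contracted against the norm-weighted Jacobian product.
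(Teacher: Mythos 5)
Your proposal is correct and follows essentially the same route as the paper: the same Lyapunov candidate $V=\tfrac{1}{2}\dot{\gencoord}^\top M\dot{\gencoord}+\tfrac{1}{2}\tilde{\gencoord}^\top K_p\tilde{\gencoord}$, cancellation of the Coriolis terms via the skew-symmetry property, $\dot{V}=-\dot{\gencoord}^\top(K_D+D)\dot{\gencoord}\le 0$, and LaSalle to force $\tilde{\gencoord}=0$ on the largest invariant set. Your handling of the gravity/buoyancy wrench is in fact more explicit than the paper's (which simply declares $N(\gencoord)\Adj_{\conf_r}^{-1}\mc{G}=0$ "without gravity"), and your remarks on radial unboundedness and the shifted equilibrium when $N\not\equiv 0$ are sound refinements rather than a different argument.
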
 
\begin{proof}Without gravity, the term $N(\gencoord) \Adj_{g_r}^{-1} \mc{G} = 0$. Let
	$$\breve{C}(\gencoord, \dot{\gencoord}) = C_1 (\gencoord, \dot{\gencoord}) + C_2(\gencoord, \dot{\gencoord})$$
	and write \eqref{eq:newton-euler} for an arbitrary control input $u(\gencoord)$ as 
	\begin{align}
		\begin{split}
			M(q) \ddot{\gencoord} &= u (\gencoord, \dot{\gencoord}) +  F(q)  -\left[\breve{C}(\gencoord, \dot{\gencoord})+ D(q, \dot{q})\right]\dot{q}.
		\end{split}
		\label{eq:newton-euler-u}
	\end{align}
	
	Consider the Lyapunov candidate function
	\begin{align}
		V(\gencoord) = \frac{1}{2} \dot{\gencoord}^\top M(\gencoord) \dot{\gencoord} + \frac{1}{2} \tilde{\gencoord}^\top K_p \tilde{\gencoord}.
		\label{eq:lyap_candidate}
	\end{align}
	\noindent {Observe}: $V(\gencoord)  > 0 \,\, \forall  \,\,\{\gencoord, \gencoord^d\} \setminus (\gencoord = \gencoord^d), \dot{\gencoord}\neq 0$; and $V(\gencoord) = 0\,\, \forall \{\gencoord, \gencoord^d\}$ when $(\gencoord = \gencoord^d)$ and $\dot{\gencoord} = 0$ in the joint space. 
	Differentiating $V(\gencoord)$ yields 
	\begin{align}
		\dot{V}(\gencoord) &= \dot{\gencoord}^\top M(\gencoord) \ddot{\gencoord} + \frac{1}{2} \dot{\gencoord}^\top \dot{M}(\gencoord) \dot{\gencoord} + \tilde{\gencoord}^\top K_p \dot{\tilde{\gencoord}}, \nonumber \\
		&=\dot{\gencoord}^\top \left(u(\gencoord, \dot{\gencoord}) + F(\gencoord) - \left[\breve{C} (\gencoord, \dot{\gencoord}) + D(\gencoord, \dot{\gencoord})\right]\dot{\gencoord} \right)  \nonumber \\
		& \qquad + \frac{1}{2} \dot{\gencoord}^\top \dot{M}(\gencoord) \dot{\gencoord} + \tilde{\gencoord}^\top K_p \dot{\gencoord}, \nonumber \\
		&=\dot{\gencoord}^\top \left(u(\gencoord, \dot{\gencoord}) + F(\gencoord) -  \left[\breve{C} (\gencoord, \dot{\gencoord}) + D(\gencoord, \dot{\gencoord})\right]\dot{\gencoord} \right)   \nonumber \\
		& +\tilde{\gencoord}^\top K_p \dot{\gencoord}  + \dot{\gencoord}^\top\left[ \frac{1}{2} \left(\dot{M}(\gencoord) - 2 \breve{C} (\gencoord, \dot{\gencoord})\right) + \breve{C}(\gencoord, \dot{\gencoord})\right]\dot{\gencoord}, \nonumber \\
		&=\dot{\gencoord}^\top \left[u(\gencoord, \dot{\gencoord}) + F(\gencoord) + K_p \tilde{\gencoord} - D(\gencoord, \dot{\gencoord})\dot{\gencoord} \right]
		\label{eq:lyap_deriv_pd_no_grav}
	\end{align}
	where the last line follows from the skew symmetric property established in Remark \ref{rem:skewsem}. Suppose we choose the control law \eqref{eq:pd_controller_no_grav}, then
	\begin{align}
		\dot{V}(\gencoord) = - \dot{\gencoord}^\top \left[K_D  +D(\gencoord, \dot{\gencoord}) \right]\dot{\gencoord} \le 0
		\label{eq:lyap_proof_pd}
	\end{align} 
	if we choose $K_D$ as a block diagonal matrix of positive gains since the drag term $D(\gencoord, \dot{\gencoord}) > 0$. Thus, we have that the potential function is decreasing if $\gencoord$ is non-zero. However, asymptotic stability of trajectories as a result of the potential (Lyapunov) function candidate does not automatically follow since $\dot{V}(t)$ is not negative at $\dot{q}=0$. That is, it is possible for the manipulator's joint space variable derivative $\dot{q} = 0$ when $\gencoord \neq {\gencoord}^d$. 
	
	To prove global asymptotic stability, suppose the domain $\Omega \subset D \in \bb{R}^{6N\times 6N}$ is the compact, positively invariant domain with respect to \eqref{eq:newton-euler}. Let $\mc{E}$ be the set of all $\gencoord \in \Omega$ where  $\dot{V}$ is identically zero so that \eqref{eq:lyap_proof_pd} implies that $\dot{q} = 0$ and $\ddot{q} = 0$. From \eqref{eq:newton-euler-u}, we must have 
	\begin{align}	
		\begin{split}
			M(q) \ddot{\gencoord}  &+\left[\breve{C}(\gencoord, \dot{\gencoord})+ D(q, \dot{q})\right]\dot{q} = -K_p \tilde{\gencoord} -K_D \dot{\gencoord}  
		\end{split}
	\end{align}
	which implies that $0 = -K_p \tilde{\gencoord}$ and hence $\tilde{\gencoord} = 0$. If $\Upsilon$ is the largest invariant set of $\mc{E}$, then by LaSalle's invariance theorem~\cite{Khalil}, every solution starting in $\Omega$ approaches $\Upsilon$ as $t \rightarrow \infty$. Whence, the equilibrium is GAS.
\end{proof}

\begin{corollary}[Fluid-driven actuation]
	If the robot is operated without cables, and is driven with a dense medium such as pressurized air or water, then the term $F(\gencoord)=0$ so that the control law 
	\begin{align}
		u(\gencoord, \dot{\gencoord}) = -K_p \tilde{\gencoord} -K_D \dot{\gencoord} 
		\label{eq:pd_controller_fluidic}
	\end{align} 
	 globally  asymptotically stabilizes the system.
\end{corollary}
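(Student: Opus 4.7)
The plan is to re-run the Lyapunov argument of the preceding theorem under the simplification $F(\gencoord) \equiv 0$ that the fluidic regime affords. I would keep the same candidate
\begin{align*}
V(\gencoord) = \tfrac{1}{2}\dot{\gencoord}^\top M(\gencoord)\dot{\gencoord} + \tfrac{1}{2}\tilde{\gencoord}^\top K_p \tilde{\gencoord},
\end{align*}
which is positive definite in $(\tilde{\gencoord}, \dot{\gencoord})$ by Property \ref{ppty:psd} and radially unbounded once $M$ is bounded below as in Property \ref{ppty:bdedness}. Differentiating along trajectories of \eqref{eq:newton-euler} with the gravity/buoyancy term absent and invoking the skew-symmetric identity of Remark \ref{rem:skewsem} to cancel the $\breve{C}(\gencoord, \dot{\gencoord})$ contribution, I expect $\dot{V}$ to collapse exactly as in \eqref{eq:lyap_deriv_pd_no_grav} to
\begin{align*}
\dot{V}(\gencoord) = \dot{\gencoord}^\top \bigl[u(\gencoord, \dot{\gencoord}) + F(\gencoord) + K_p\tilde{\gencoord} - D(\gencoord, \dot{\gencoord})\dot{\gencoord}\bigr].
\end{align*}

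In the fluid-driven setting the lumped cable-tip wrench vanishes, so $F(\gencoord) = 0$ and no residual term needs to be canceled by the controller. Plugging the proposed law \eqref{eq:pd_controller_fluidic} into the expression above then yields $\dot{V}(\gencoord) = -\dot{\gencoord}^\top \bigl[K_D + D(\gencoord, \dot{\gencoord})\bigr]\dot{\gencoord} \le 0$, since $K_D$ is positive-diagonal by design and the hydrodynamic drag operator $D(\gencoord, \dot{\gencoord})$ is positive semidefinite in a dense surrounding medium.

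The concluding step is LaSalle's invariance principle, which I would apply \emph{mutatis mutandis} from the preceding theorem. On the largest invariant subset of $\{\dot{V}=0\} \supseteq \{\dot{\gencoord} = 0\}$ one has $\ddot{\gencoord} = 0$, so the closed-loop equation forces $0 = -K_p\tilde{\gencoord}$, and positive definiteness of $K_p$ gives $\tilde{\gencoord} = 0$. Radial unboundedness of $V$ then promotes this to global asymptotic stability of $(\tilde{\gencoord},\dot{\gencoord}) = (0,0)$.

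The only delicate point I anticipate is controlling the sign of $K_D + D(\gencoord, \dot{\gencoord})$ at rest, since the velocity-dependent drag $D$ typically vanishes at $\dot{\gencoord} = 0$; strict positive-definiteness of the designer-chosen $K_D$ alone suffices to maintain both the $\dot{V} \le 0$ inequality and the LaSalle step, so the corollary should follow without any additional hypothesis beyond what the parent theorem already assumes.
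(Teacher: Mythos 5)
Your proposal is correct and follows essentially the same route as the paper, which simply observes that $F(\gencoord)=0$ in \eqref{eq:lyap_deriv_pd_no_grav} and lets the Lyapunov/LaSalle argument of the preceding theorem carry over unchanged. Your added remark that the designer-chosen positive-definite $K_D$ alone secures $\dot{V}\le 0$ even where the velocity-dependent drag vanishes is a sensible clarification, but it does not change the argument.
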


\begin{proof}
	The proof follows easily since $F(\gencoord) = 0$  in \eqref{eq:lyap_deriv_pd_no_grav}.
\end{proof}


\begin{theorem}[Gravity/Buoyancy-Compensated Control]
	Without gravity and/or buoyancy-compensation, there are parasitic disturbances and unmodeled parameters in \eqref{eq:newton-euler} where an offset in the positional error $\tilde{\gencoord}$ at steady state may show up since we do not explicitly compensate for gravity and buoyancy.where the PD control law is not sufficient for driving the strain states to equilibrium. The following controller will achieve zero steady state convergence.
		\begin{align}
			u(\gencoord, \dot{\gencoord}) = -K_p \tilde{\gencoord} -K_D \dot{\gencoord} -F(\gencoord) - N(\gencoord) \Adj_{g_r}^{-1} \mc{G}.
			\label{eq:control_grav_buoy}
		\end{align}
\end{theorem}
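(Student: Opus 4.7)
The plan is to mirror the Lyapunov argument used for the cable-driven theorem, now keeping the gravity/buoyancy term $N(\gencoord)\Adj_{\conf_r}^{-1}\mc{G}$ on the right hand side of \eqref{eq:newton-euler}. I would reuse the same positive definite candidate
\begin{align*}
V(\gencoord) = \tfrac{1}{2}\dot{\gencoord}^\top M(\gencoord)\dot{\gencoord} + \tfrac{1}{2}\tilde{\gencoord}^\top K_p \tilde{\gencoord},
\end{align*}
which is radially unbounded in $(\tilde{\gencoord},\dot{\gencoord})$ by Properties \ref{ppty:psd}--\ref{ppty:bdedness} together with positive definiteness of $K_p$.

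Differentiating $V$ along closed-loop trajectories and invoking Remark \ref{rem:skewsem} to exchange $\tfrac{1}{2}\dot{\gencoord}^\top \dot{M}(\gencoord)\dot{\gencoord}$ for $\dot{\gencoord}^\top \breve{C}(\gencoord,\dot{\gencoord})\dot{\gencoord}$, the Coriolis contribution cancels exactly as in \eqref{eq:lyap_deriv_pd_no_grav} and I am left with
\begin{align*}
\dot{V} = \dot{\gencoord}^\top \bigl[ u(\gencoord,\dot{\gencoord}) + F(\gencoord) + N(\gencoord)\Adj_{\conf_r}^{-1}\mc{G} + K_p\tilde{\gencoord} - D(\gencoord,\dot{\gencoord})\dot{\gencoord} \bigr].
\end{align*}
Plugging in the feedback law \eqref{eq:control_grav_buoy} cancels the cable wrench $F(\gencoord)$, the gravity/buoyancy feedforward $N(\gencoord)\Adj_{\conf_r}^{-1}\mc{G}$, and the proportional term $K_p\tilde{\gencoord}$, leaving $\dot{V} = -\dot{\gencoord}^\top [K_D + D(\gencoord,\dot{\gencoord})]\dot{\gencoord} \le 0$ since $K_D$ is block-diagonal positive and the drag is positive semidefinite.

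Global asymptotic stability then follows from LaSalle's invariance principle verbatim as in the cable-driven case: on the largest invariant subset of $\{\dot V = 0\}$ one has $\dot{\gencoord}\equiv 0$, hence $\ddot{\gencoord}\equiv 0$, and inserting these into the closed-loop form of \eqref{eq:newton-euler-u} with the new control yields $0 = -K_p\tilde{\gencoord}$, so $\tilde{\gencoord}=0$ by invertibility of $K_p$. The invariant set collapses to the desired equilibrium $(\tilde{\gencoord},\dot{\gencoord})=(0,0)$, eliminating the steady-state offset that a gravity-agnostic PD law would otherwise exhibit.

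The main obstacle is bookkeeping rather than conceptual. One must verify that $N(\gencoord)\Adj_{\conf_r}^{-1}\mc{G}$ is a function of the configuration alone (no $\dot{\gencoord}$), so that cancelling it via the feedforward in \eqref{eq:control_grav_buoy} does not leak a velocity-dependent residual into $\dot V$; inspection of the $N$-integrand in \eqref{eq:loads_full_form}, whose only dependence is through $J^\top \mc{M}\Adj_{\conf}^{-1}$, confirms this. A secondary concern is that $\dot V$ is only negative semidefinite (it vanishes whenever $\dot{\gencoord}=0$, even at non-equilibrium $\gencoord$), so direct Lyapunov arguments yield only stability; this is precisely why the LaSalle step is needed, and it carries over unmodified from the PD proof.
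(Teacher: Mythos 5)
Your proof is correct, and it reaches the stated conclusion by a somewhat different route than the paper. The paper does not prove the theorem's controller \eqref{eq:control_grav_buoy} directly: it augments the Lyapunov candidate with an integral term, $V = \tfrac{1}{2}\dot{\gencoord}^\top M\dot{\gencoord} + \tfrac{1}{2}\tilde{\gencoord}^\top K_p\tilde{\gencoord} + \tfrac{1}{2}\int_0^t \tilde{\gencoord}^\top K_I \tilde{\gencoord}\, ds$ as in \eqref{eq:lyap_pid_grav}, establishes $\dot V \le 0$ for the full PID-plus-compensation law \eqref{eq:cable_control_grav_all} (using a frequency-domain shorthand $K_I/s$ inside the time-domain derivative \eqref{eq:lyap_deriv_pid_grav}), and only afterwards remarks that with suitable gains the integral action may be dropped to recover \eqref{eq:control_grav_buoy}. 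You instead reuse the PD candidate \eqref{eq:lyap_candidate} verbatim, cancel $F(\gencoord)$ and $N(\gencoord)\Adj_{\conf_r}^{-1}\mc{G}$ directly in $\dot V$, and close with the same LaSalle argument as the cable-driven theorem. Your version is the more self-contained and rigorous argument for exactly the controller stated in the theorem: it avoids mixing time- and frequency-domain objects inside a Lyapunov derivative, and your explicit check that $N(\gencoord)\Adj_{\conf_r}^{-1}\mc{G}$ depends on $\gencoord$ alone (so the feedforward cancellation leaves no velocity-dependent residue) is a detail the paper leaves implicit. What the paper's detour buys is coverage of the PID variant in the same stroke, at the cost of a less direct justification of the theorem as written.
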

\begin{proof}
	From \eqref{eq:newton-euler}, write 
	\begin{align}
		\begin{split}
			M(q) \ddot{\gencoord} = u (\gencoord, \dot{\gencoord}) +  F(q) + N(q) \Adj_{g_r} ^{-1}\mathcal{G}  \\
			-\left[\breve{C}(\gencoord, \dot{\gencoord})+ D(\gencoord, \dot{\gencoord})\right]\dot{q}.
		\end{split}
		\label{eq:newton_euler_grav_comp}
	\end{align}
	Let us choose the Lyapunov function candidate 
	\begin{align}
		\label{eq:lyap_pid_grav}
		V(\gencoord) = \frac{1}{2} \dot{\gencoord}^\top M(\gencoord) \dot{\gencoord} + \frac{1}{2} \tilde{\gencoord}^\top K_p \tilde{\gencoord}  +    \frac{1}{2} \int_0^t{\tilde{\gencoord}^\top(s)K_I \tilde{\gencoord} ds},
	\end{align}
	where  $V(\gencoord) > 0 \,\, \forall  \,\,\{\gencoord, \gencoord^d\} \setminus (\gencoord = \gencoord^d), \dot{\gencoord}\neq 0$;  $V(\gencoord) = 0\,\, \forall \{\gencoord, \gencoord^d\}$ when $(\gencoord = \gencoord^d)$,  $K_I$ is the block diagonal integral gain, and $\dot{\gencoord} = 0$ in the joint space. 
	Tt can be verified that
	\begin{align}
		\label{eq:lyap_deriv_pid_grav}
		\dot{V}(\gencoord) 
		%
		%
		&=\dot{\gencoord}^\top \left[u(\gencoord, \dot{\gencoord}) + F(\gencoord)  + \left\{K_p + \frac{K_I}{s}\right\} \tilde{\gencoord} \nonumber \right. \\
		& \left. \qquad \qquad - D(\gencoord, \dot{\gencoord})\dot{\gencoord} + N(q) \Adj_{g_r} ^{-1}\mathcal{G}\right]
	\end{align}
	where we have used the frequency domain representation of the integral operator \ie $1/s$. 
	If we set 
	\begin{align}
		u(\gencoord, \dot{\gencoord}) &= - \left\{K_p + \frac{K_I}{s}\right\} \tilde{\gencoord}  - K_D \dot{q}   - F(\gencoord) - N(\gencoord) \Adj_{g_r}^{-1} \mc{G},
		\label{eq:cable_control_grav_all}
	\end{align}
	we must have
	\begin{align}
		\dot{V}(t) = -\dot{\gencoord}^\top \left[K_D + D(\gencoord, \dot{\gencoord}) \right] \gencoord \le 0, \text{ if } \dot{q} \neq 0
	\end{align}
	else for $\dot{V}$ identically zero, LaSalle's principle applies. 
\end{proof}

 Suppose that we choose appropriate proportional and derivative gains with the gravity-compensation term in \eqref{eq:cable_control_grav_all},  then the integral again $K_I$ may be ignored as 
	\begin{align}
		u(\gencoord, \dot{\gencoord}) &= - K_p \tilde{\gencoord}  - K_D \dot{q}   - F(\gencoord) - N(\gencoord) \Adj_{g_r}^{-1} \mc{G}.
	\end{align} 
	The proof of the theorem follows as a result.
	\section{Numerical Setup and Results}
\label{sec:results}
\begin{figure*}[t]
	\centering
	\begin{tabular}{cc} 
		\textbf{a1)} Cable-driven, strain twist setpoint terrestrial control. & \textbf{a2)} Fluid-actuated, strain twist setpoint  terrestrial control. 
		\\
		\includegraphics[width=0.48\textwidth]{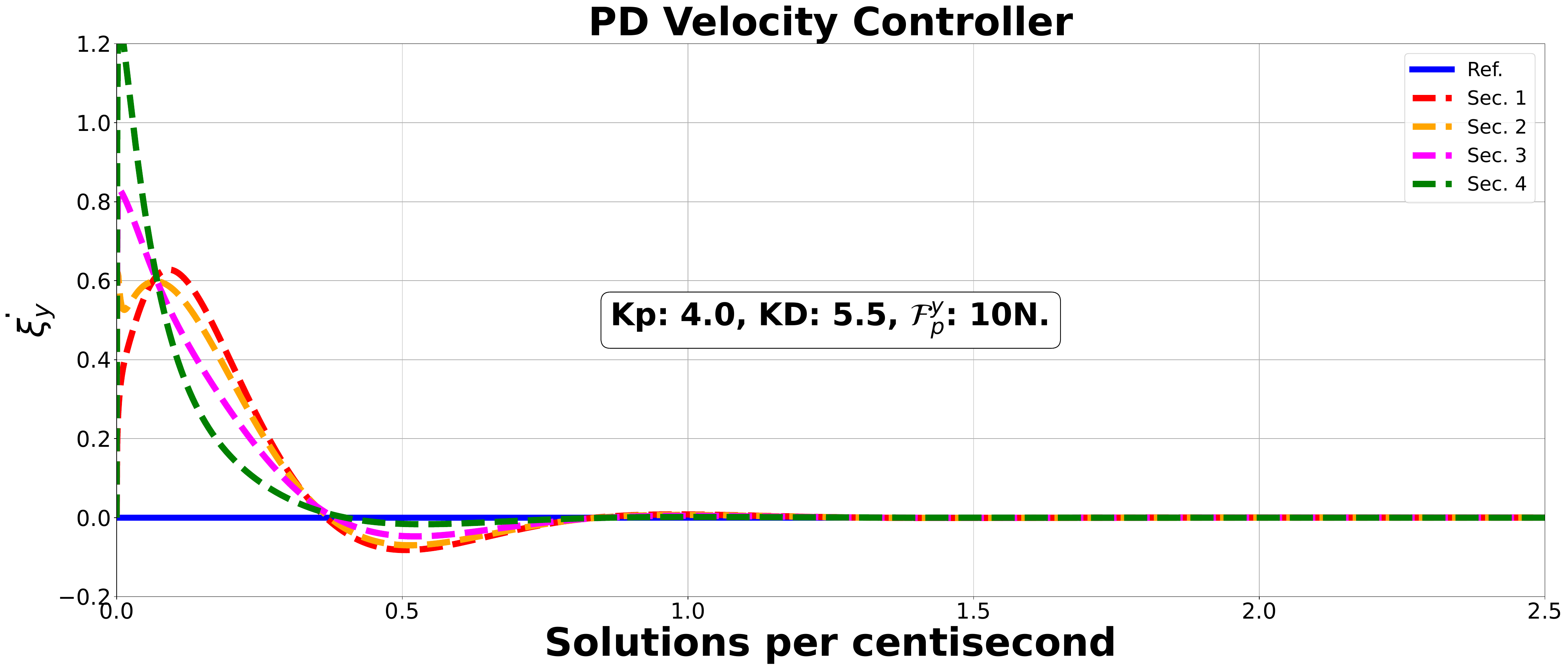}
		& 
		\includegraphics[width=0.48\textwidth]{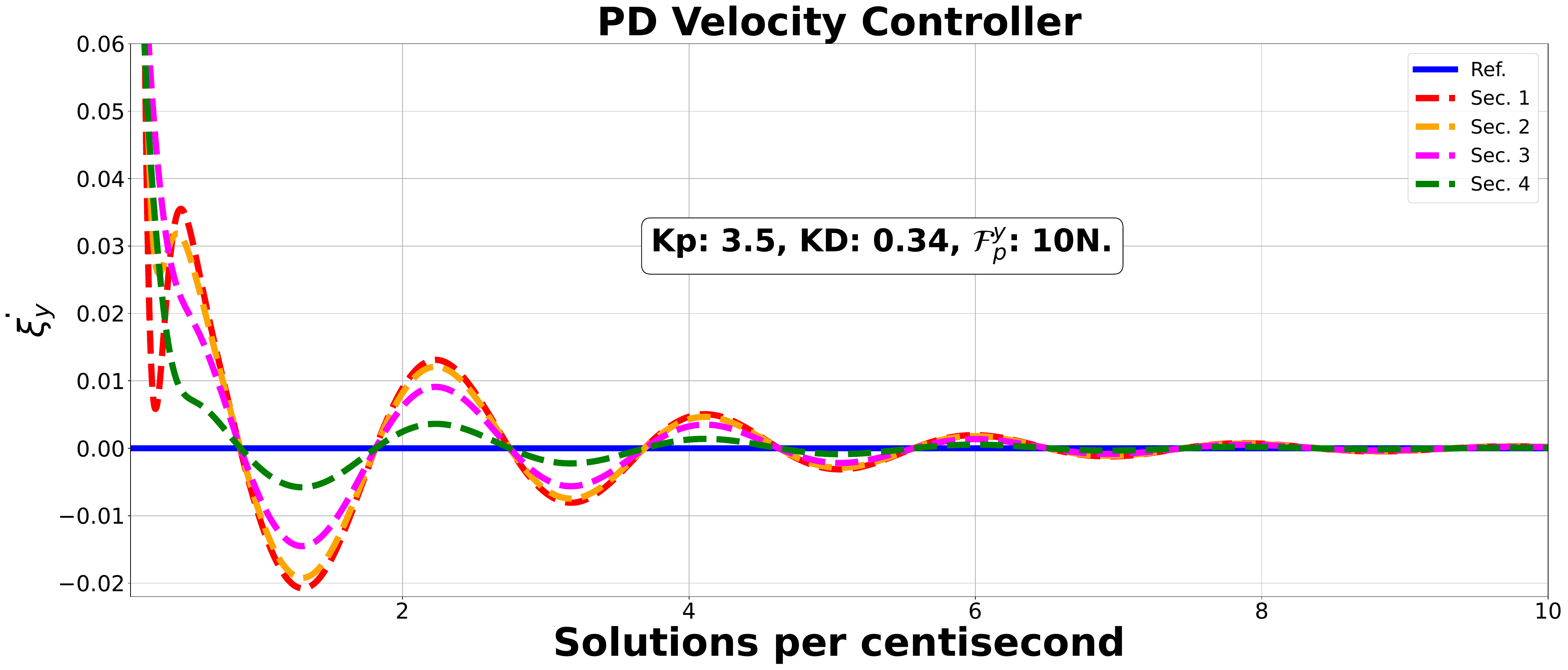}
		\\
		\textbf{a3)} Fluid-actuated, strain twist setpoint underwater control. & \textbf{a4)} Cable-driven, strain twist setpoint  regulation.
		\\		
		\includegraphics[width=0.48\textwidth]{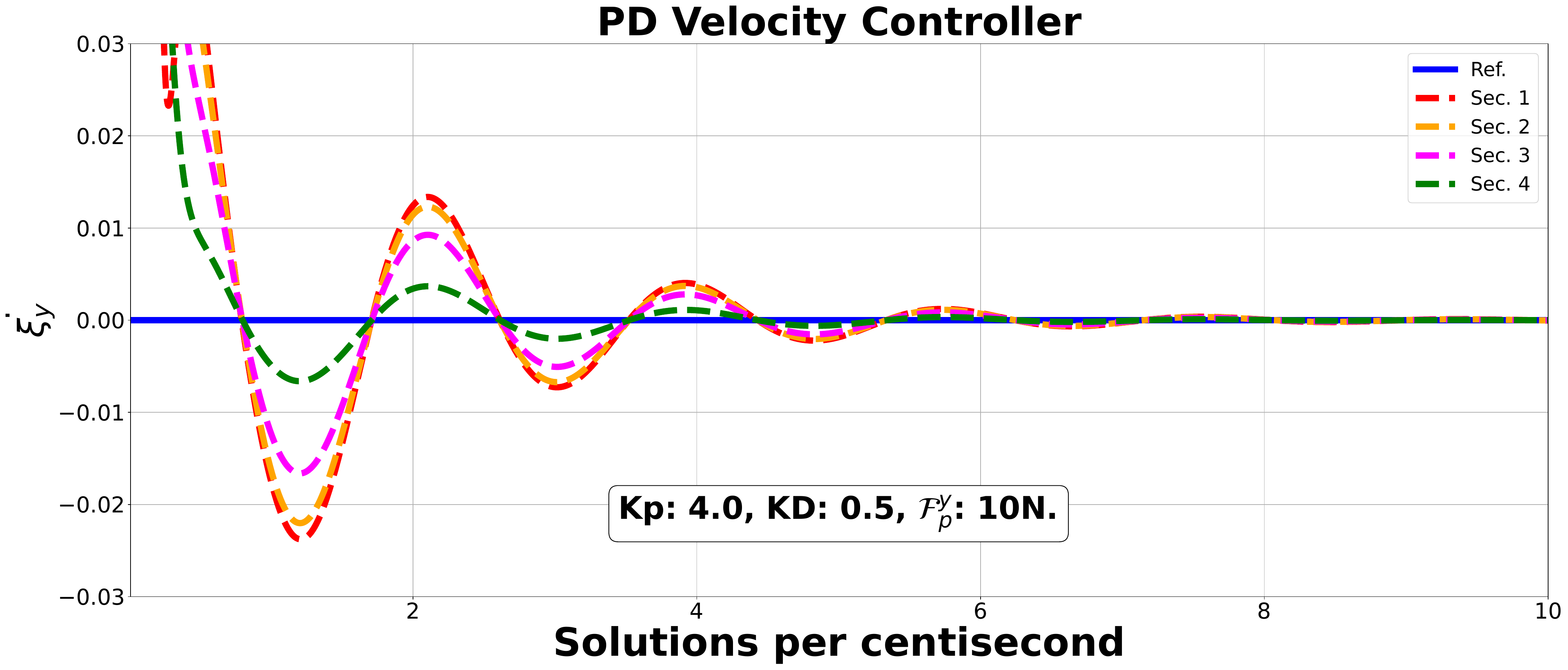} &
		\includegraphics[width=0.48\textwidth]{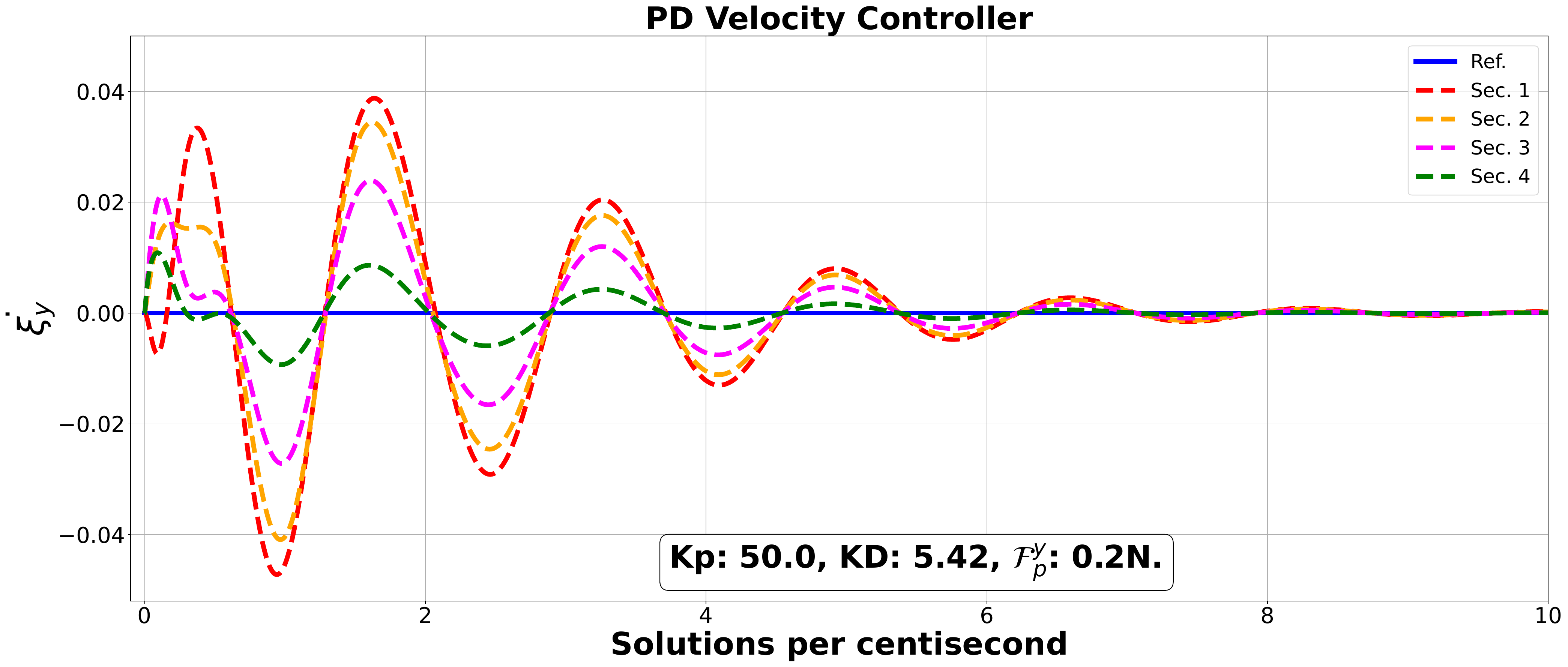}
		\\
		\textbf{b1)} Position control with no gravity-compensation. & \textbf{b2)} Gravity-compensated terrestrial position control.
		\\
		\includegraphics[width=0.48\textwidth]{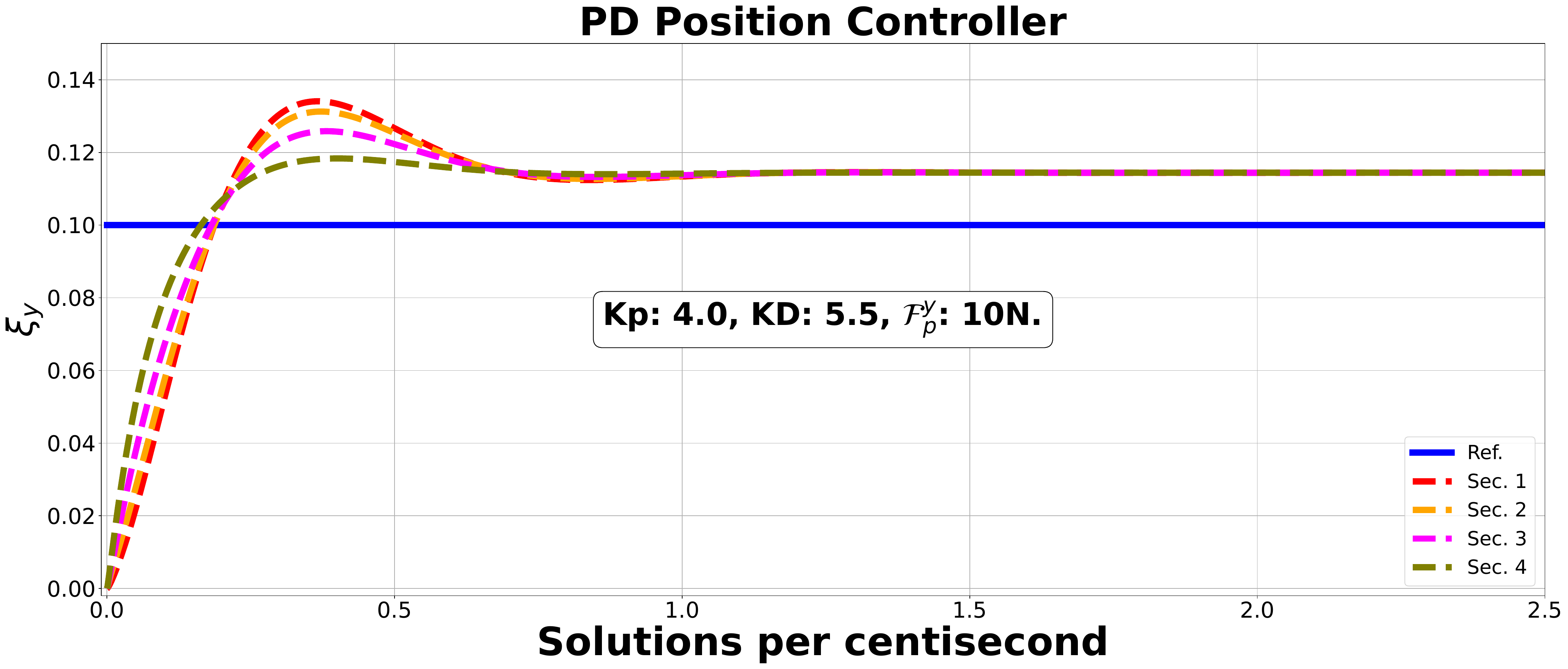}
		&
		\includegraphics[width=0.48\textwidth]{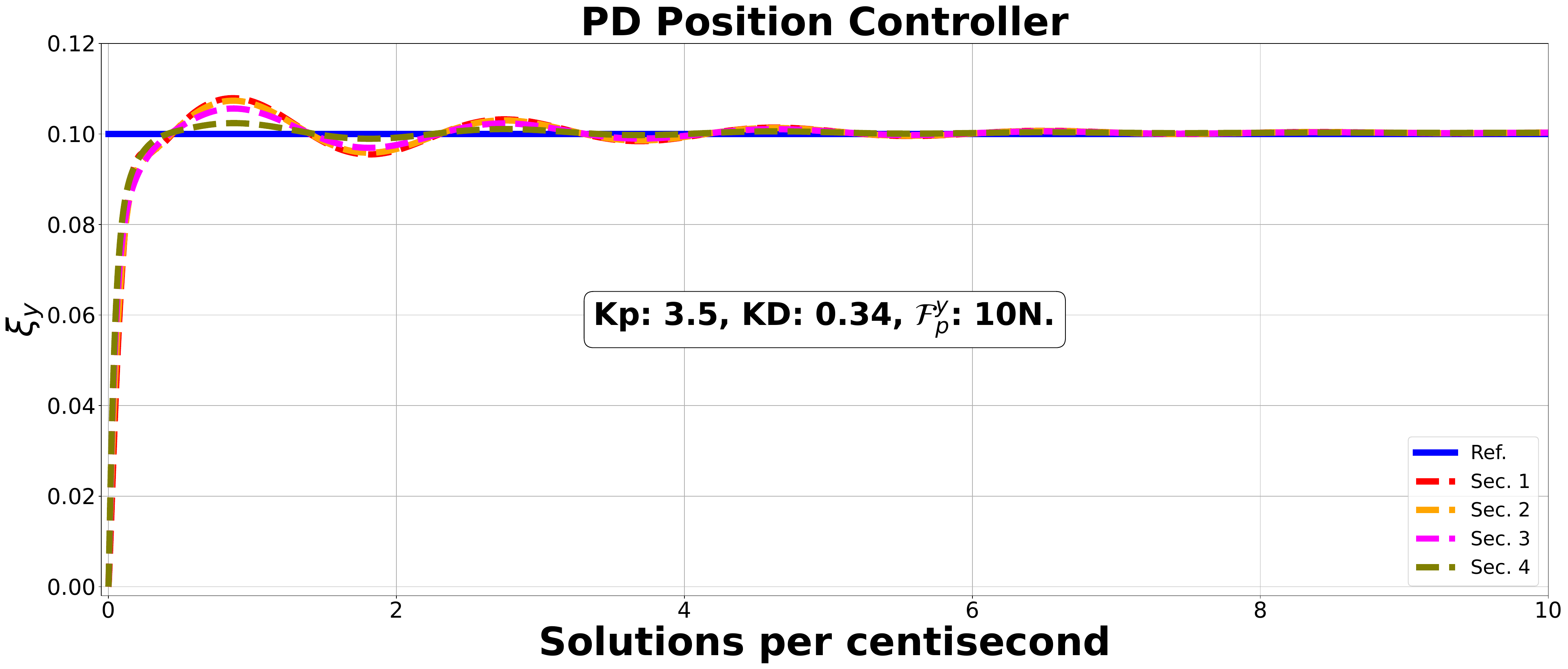}%
	\end{tabular}
	\caption{\textit{(a1 -- a3):} Linear strain twist  state regulation for a 4-section (41 microsolid pieces per section) soft arm under a $10N$ lateral tip load, $\mc{F}_p^y$ that is \textit{(a1)} cable-driven in a sparse medium \textit{(a2)} fluid-actuated in a sparse medium. \textit{(a3)} fluid-actuated in a dense medium such as water (\ie with drag forces compensation).  \textit{(a4)} Under a miniature tip-load of $0.2N$, a cable-driven 4-section arm finely regulates strain twists to equilibrium over time. \textit{(b1-b2)}: Linear strain position regulation for a cable-driven arm operating in air \textit{(b1)} and a fluid-actuated arm illustrating the effect of steady state errors in the absence of gravity compensation. 
		The horizontal axes show the number of (re)-integration time-steps per second for the adaptive Runge-Kutta-Fehlberg integrator we utilized in computing the controllers.
	}
	\label{fig:pd_controls}
\end{figure*}
\begin{figure}[tb!]
	\centering 
	\begin{tabular}{cc} 
		\textbf{(c1)} \\ \includegraphics[width=0.47\textwidth]{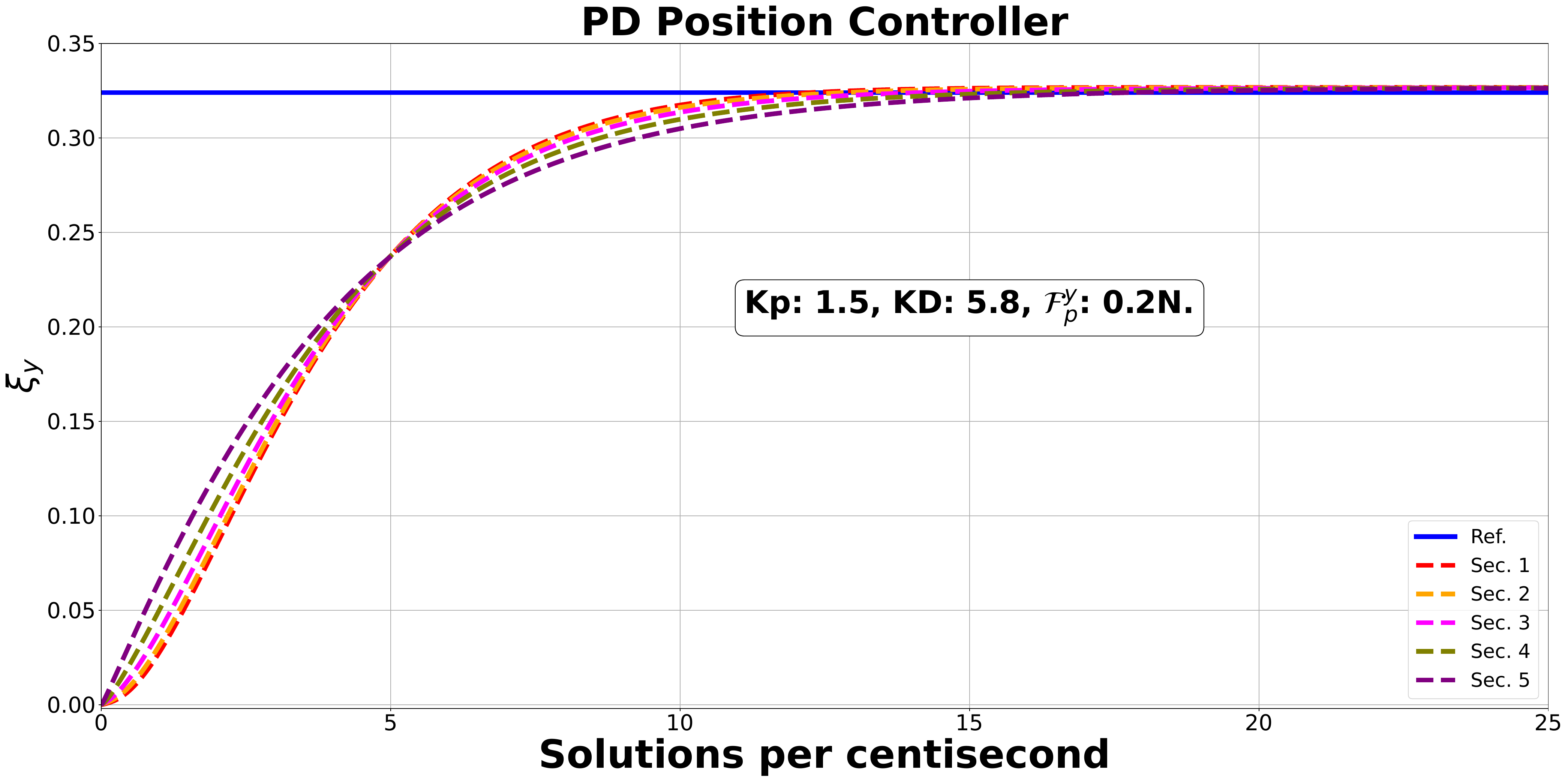} 
		\\
		\\
		\textbf{(c2)} \\ \includegraphics[width=0.47\textwidth]{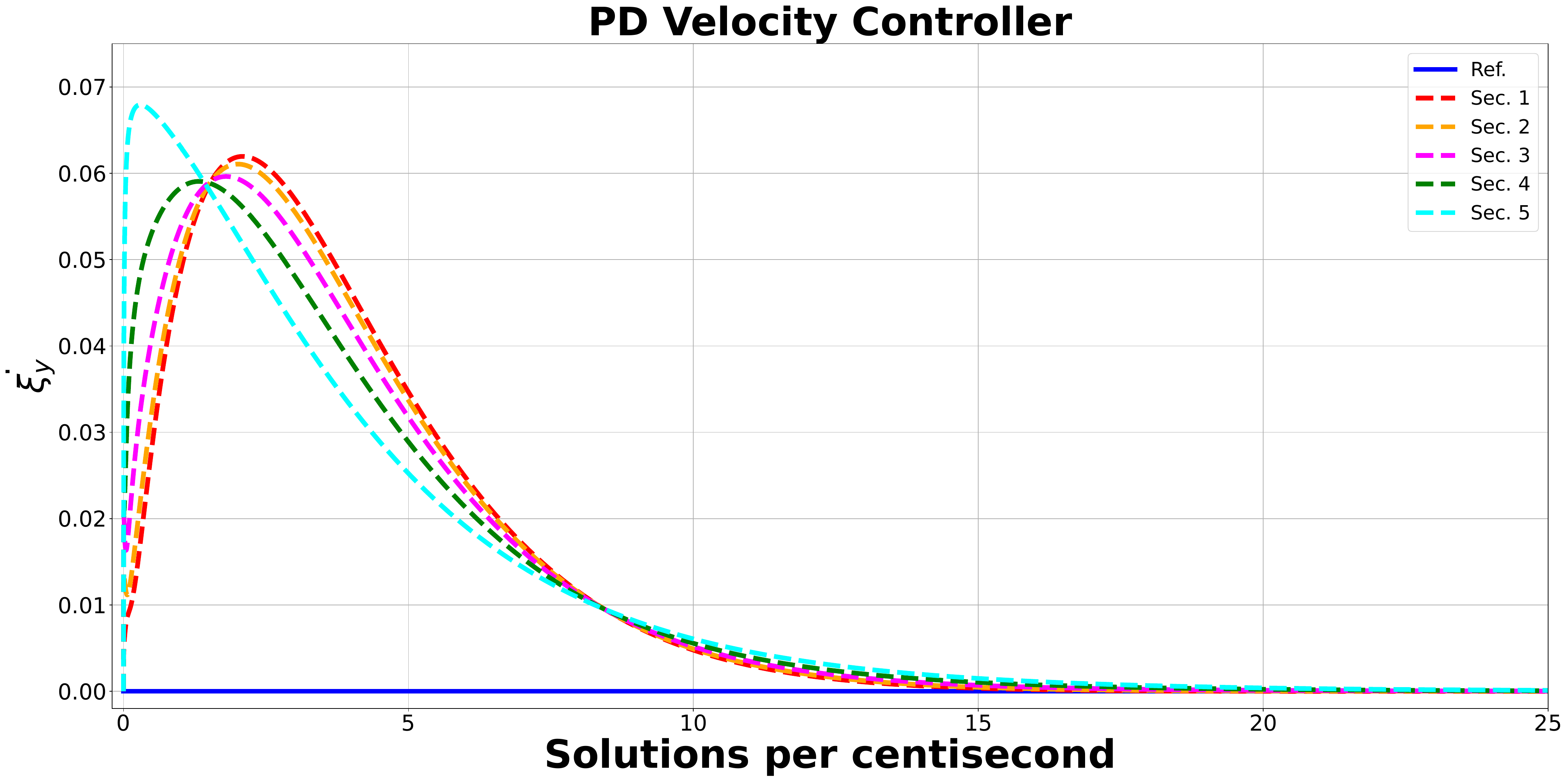}%
		\\
	\end{tabular}
\caption{Gravity-based compensation position and velocity control: \textbf{c1)} Cable-driven gravity-compensated position control. \textbf{c2)} Cable-driven gravity-compensated velocity control.}
\end{figure}
Our goal is to regulate the strain and strain velocity states of the robot per section under different constant tip loads \textit{despite the inevitable non-constant loads due to gravity, external forces, and inertial forces}. 

\subsection{System Setup and Parameters}
As seen in \autoref{fig:pcs_kine}, the tip load acts on the $+y$-axis in the robot's base frame. 
We use $\mc{F}_p^y$ to represent the tip load acting along the $+y$ direction in what follows. 
Given the geometry of the robot, we choose a drag coefficient of $0.82$ (a Reynolds number of order $104$) for underwater operations. We set the Young's modulus, $E = 110 kPa$ and the shear viscosity modulus to $3 kPa$. The bending second inertia momenta are set to $I_y = I_z =\pi r^4/4$ while the torsion second moment of inertia is set to $I_x=\pi r^4/2$ for $r=0.1 m$, the arm's radius -- uniform across sections. The arm length is $L=0.2m$. We assume a (near-incompressible) rubber material makes up the robot's body and set its Poisson ratio to $0.45$; the mass is chosen as $\mc{M} = \rho [I_x, I_y, I_z, A, A, A]$ for a cylindrical soft shell's nominal density of  $\rho=2,000 kg m^{-3}$ as used in~\cite{RendaTRO18}; the cross-sectional area $A=\pi r^2$ so that $I_x=\pi r^4/2$. The drag screw stiffness matrix $D$ in \eqref{eq:newton-euler} is a function of each section's geometry and hydrodynamics so that $D = -\rho_w \nu^T \nu \breve{D} \nu/|\nu|$ where $\rho_w$ is the  water density set to $997 kg/m^3$, and $\breve{D}$ is the tensor that models the geometry and hydrodynamics factors in the viscosity model (see \cite[\S II.B, eq. 6]{RendaTRO18}). The curvilinear abscissa, $X \in [0, L]$ was discretized into $41$ microsolids per section.

\subsection{Discussion}
We adopted the recursive articulated-body algorithm~\cite{Featherstone} and integrate the right-hand-side of differential equations  \eqref{eq:newton-euler-u} and \eqref{eq:newton_euler_grav_comp} using a Runge-Kutta-Fehlberg (RKF) adaptive scheme implemented in Python and Torch~\cite{pytorch} with relative and absolute errors respectively set to $10^{-7}$ and $10^{-9}$. We have found these tolerance values to be crucial for a successful numerical integration scheme as it avoids numerical instability. Depicted on the vertical axes of each chart of \autoref{fig:pd_controls} are the  strain positions or twists along $+y$-direction (on the robot's local frame) while the horizontal axes depict the number of adaptive RKF (re-) integration steps per for every tenth of a second. The trajectory evolution over time per discretized Cosserat section is shown in the various ``dash-dotted" lines,  while the ``solid blue" lines denote the reference. The controller parameters are annotated within chart together with the amount of constant tip load in Newtons. We see that all joint configurations are stabilized to reference strain twist states  $(\dot{\gencoord}^d)$ for respective constant tip loads. It can also be inferred that more sections a in the discretized Cosserat model lead to less bumpy state regulation strategy.  

In (\textit{a1--a3}), the strain twists are precisely regulated to zero offset errors  despite large constant tip loads (\textit{a1-a3}); and with small tip disturbances \textit{(a4)}. Notably, the large value of the proportional gain causes overdamping in the transients before convergence. 
However,  a 20\% error offset at steady state is observed in the linear strain position regulation case \textbf{(b1)} in the absence of the gravity compensation term in the control law. We attribute this to the nature of PD Lagrangian controllers realized via computed torque methods~\cite{SpongHist}. For accurate position control with PD controllers, steady state errors can  be achieved when the gravity os compensated in the control design as  \textit{(b2)} shows. 
	\section{Conclusion}
\label{sec:conclude}
We have presented the Lagrangian properties for soft robots under a discrete Cosserat model. These properties were then exploited to cancel out nonlinearities in the derived controllers  for strain states regulation. Our numerical experiments confirm the conclusions from our Lyapunov analyses. Similar to rigid robots under PD control laws with Lagrangian dynamics, we have observed strain position steady state offsets. Position control efficacy are shown to improve if gravity and/or buoyancy is compensated in the control law. 

	\section{Acknowledgment}
A vote of thanks to James Forbes for early manuscript feedback. 
	\bibliographystyle{plainnat}
	\bibliography{SoRoBC}

\begin{thebibliography}{23}
\providecommand{\natexlab}[1]{#1}
\providecommand{\url}[1]{\texttt{#1}}
\expandafter\ifx\csname urlstyle\endcsname\relax
  \providecommand{\doi}[1]{doi: #1}\else
  \providecommand{\doi}{doi: \begingroup \urlstyle{rm}\Url}\fi

\bibitem[Agarwal et~al.(2016)Agarwal, Besuchet, Audergon, and Paik]{wearables2}
Gunjan Agarwal, Nicolas Besuchet, Basile Audergon, and Jamie Paik.
\newblock {Stretchable Materials for Robust Soft Actuators Towards Assistive
  Wearable Devices}.
\newblock \emph{Scientific reports}, 6\penalty0 (1):\penalty0 34224, 2016.

\bibitem[Boyer and Renda(2017)]{Boyer2017}
Frederic Boyer and Federico Renda.
\newblock {Poincar{\'{e}}'s Equations for Cosserat Media: Application to
  Shells}.
\newblock \emph{Journal of Nonlinear Science}, 27\penalty0 (1):\penalty0 1--44,
  2017.
\newblock ISSN 14321467.

\bibitem[Boyer et~al.(2010)Boyer, Porez, and Leroyer]{Boyer2010}
Frederic Boyer, Mathieu Porez, and Alban Leroyer.
\newblock Poincar{\'e}--cosserat equations for the lighthill three-dimensional
  large amplitude elongated body theory: application to robotics.
\newblock \emph{Journal of Nonlinear Science}, 20:\penalty0 47--79, 2010.

\bibitem[Boyer et~al.(2022)Boyer, Lebastard, Candelier, Renda, and
  Alamir]{RendaStatics}
Fr{\'e}d{\'e}ric Boyer, Vincent Lebastard, Fabien Candelier, Federico Renda,
  and Mazen Alamir.
\newblock {Statics and Dynamics of Continuum Robots based on Cosserat Rods and
  Optimal Control Theorie}s.
\newblock \emph{IEEE Transactions on Robotics}, 39\penalty0 (2):\penalty0
  1544--1562, 2022.

\bibitem[Coevoet et~al.(2017)Coevoet, Escande, and Duriez]{FEM}
Eulalie Coevoet, Adrien Escande, and Christian Duriez.
\newblock {Optimization-based Inverse Model of Soft Robots with Contact
  Handling}.
\newblock \emph{{IEEE Robotics and Automation Letters}}, 2\penalty0
  (3):\penalty0 1413--1419, 2017.

\bibitem[Cosserat and Cosserat(1909)]{CosseratBrothers}
Eug{\`e}ne Maurice~Pierre Cosserat and Fran{\c{c}}ois Cosserat.
\newblock \emph{Th{\'e}orie des corps d{\'e}formables}.
\newblock A. Hermann et fils, 1909.

\bibitem[Featherstone(2014)]{Featherstone}
Roy Featherstone.
\newblock \emph{Rigid Body Dynamics Algorithms}.
\newblock Springer, 2014.

\bibitem[Hannan and Walker(2003)]{IanWalkerDiffKine}
Michael~W Hannan and Ian~D Walker.
\newblock {Kinematics and the Implementation of an Elephant's Trunk Manipulator
  and other Continuum Style Robots}.
\newblock \emph{{Journal of Robotic Systems}}, 20\penalty0 (2):\penalty0
  45--63, 2003.

\bibitem[Hauser et~al.(2014)Hauser, Füchslin, and
  Pfeifer]{e-book_on_morphological_computation_2014}
Helmut Hauser, Rudolf~M. Füchslin, and Rolf Pfeifer, editors.
\newblock \emph{{Opinions and Outlooks on Morphological Computation}}.
\newblock Hauser, Helmut and Füchslin, Rudolf M. and Pfeifer, Rolf, 2014.
\newblock ISBN 978-3-033-04515-6.

\bibitem[Khalil(2015)]{Khalil}
Hassan~K Khalil.
\newblock \emph{{Nonlinear Control}}, volume 406.
\newblock Pearson New York, 2015.

\bibitem[Laschi et~al.(2012)Laschi, Cianchetti, Mazzolai, Margheri, Follador,
  and Dario]{CeciliaOctopus}
Cecilia Laschi, Matteo Cianchetti, Barbara Mazzolai, Laura Margheri, Maurizio
  Follador, and Paolo Dario.
\newblock Soft robot arm inspired by the octopus.
\newblock \emph{Advanced robotics}, 26\penalty0 (7):\penalty0 709--727, 2012.

\bibitem[Manti et~al.(2015)Manti, Hassan, Passetti, D'Elia, Laschi, and
  Cianchetti]{grippers}
Mariangela Manti, Taimoor Hassan, Giovanni Passetti, Nicol{\`o} D'Elia, Cecilia
  Laschi, and Matteo Cianchetti.
\newblock {A Bioinspired Soft Robotic Gripper for Adaptable and Effective
  Grasping}.
\newblock \emph{Soft Robotics}, 2\penalty0 (3):\penalty0 107--116, 2015.

\bibitem[Marchese et~al.(2014)Marchese, Onal, and Rus]{Marchese}
Andrew~D Marchese, Cagdas~D Onal, and Daniela Rus.
\newblock {Autonomous Soft Robotic Fish Capable of Escape Maneuvers Using
  Fluidic Elastomer Actuators}.
\newblock \emph{Soft robotics}, 1\penalty0 (1):\penalty0 75--87, 2014.

\bibitem[Nuckols et~al.(2021)Nuckols, Lee, Swaminathan, Orzel, Howe, and
  Walsh]{ConorLamb}
R.~W. Nuckols, S.~Lee, K.~Swaminathan, D.~Orzel, R.~D. Howe, and C.~J. Walsh.
\newblock Individualization of exosuit assistance based on measured muscle
  dynamics during versatile walking.
\newblock \emph{Science Robotics}, 6\penalty0 (60):\penalty0 eabj1362, 2021.

\bibitem[Paszke et~al.(2019)Paszke, Gross, Massa, Lerer, Bradbury, Chanan,
  Killeen, Lin, Gimelshein, Antiga, Desmaison, Kopf, Yang, DeVito, Raison,
  Tejani, Chilamkurthy, Steiner, Fang, Bai, and Chintala]{pytorch}
Adam Paszke, Sam Gross, Francisco Massa, Adam Lerer, James Bradbury, Gregory
  Chanan, Trevor Killeen, Zeming Lin, Natalia Gimelshein, Luca Antiga, Alban
  Desmaison, Andreas Kopf, Edward Yang, Zachary DeVito, Martin Raison, Alykhan
  Tejani, Sasank Chilamkurthy, Benoit Steiner, Lu~Fang, Junjie Bai, and Soumith
  Chintala.
\newblock Pytorch: An imperative style, high-performance deep learning library.
\newblock In \emph{Advances in Neural Information Processing Systems 32}, pages
  8024--8035. Curran Associates, Inc., 2019.

\bibitem[Polygerinos et~al.(2015)Polygerinos, Wang, Galloway, Wood, and
  Walsh]{ConorSoroGlove}
Panagiotis Polygerinos, Zheng Wang, Kevin~C Galloway, Robert~J Wood, and
  Conor~J Walsh.
\newblock Soft robotic glove for combined assistance and at-home
  rehabilitation.
\newblock \emph{Robotics and Autonomous Systems}, 73:\penalty0 135--143, 2015.

\bibitem[Qiu et~al.(2023)Qiu, Zhang, Sun, Xiong, Lu, and Wang]{RSSPCC}
Ke~Qiu, Jingyu Zhang, Danying Sun, Rong Xiong, Haojian Lu, and Yue Wang.
\newblock An efficient multi-solution solver for the inverse kinematics of
  3-section constant-curvature robots.
\newblock \emph{arXiv preprint arXiv:2305.01458}, 2023.

\bibitem[Renda et~al.(2018)Renda, Boyer, Dias, and Seneviratne]{RendaTRO18}
Federico Renda, Fr{\'e}d{\'e}ric Boyer, Jorge Dias, and Lakmal Seneviratne.
\newblock Discrete cosserat approach for multisection soft manipulator
  dynamics.
\newblock \emph{IEEE Transactions on Robotics}, 34\penalty0 (6):\penalty0
  1518--1533, 2018.

\bibitem[Romero et~al.(2014)Romero, Ortega, and Sarras]{Ortega2014}
Jos{\'e}~Guadalupe Romero, Romeo Ortega, and Ioannis Sarras.
\newblock A globally exponentially stable tracking controller for mechanical
  systems using position feedback.
\newblock \emph{IEEE Transactions on Automatic Control}, 60\penalty0
  (3):\penalty0 818--823, 2014.

\bibitem[Shih et~al.(2023)Shih, Naughton, Halder, Chang, Kim, Gillette, Mehta,
  and Gazzola]{HierarchicalGazzolla}
Chia-Hsien Shih, Noel Naughton, Udit Halder, Heng-Sheng Chang, Seung~Hyun Kim,
  Rhanor Gillette, Prashant~G Mehta, and Mattia Gazzola.
\newblock Hierarchical control and learning of a foraging cyberoctopus.
\newblock \emph{Advanced Intelligent Systems}, page 2300088, 2023.

\bibitem[Spong(2022)]{SpongHist}
Mark~W. Spong.
\newblock An historical perspective on the control of robotic manipulators.
\newblock \emph{Annual Review of Control, Robotics, and Autonomous Systems},
  5:\penalty0 1--31, 2022.

\bibitem[Thuruthel et~al.(2018)Thuruthel, Falotico, Renda, and
  Laschi]{ThuruthelSoRo}
Thomas~George Thuruthel, Egidio Falotico, Federico Renda, and Cecilia Laschi.
\newblock {Model-based Reinforcement Learning for Closed-loop Dynamic Control
  of Soft Robotic Manipulators}.
\newblock \emph{IEEE Transactions on Robotics}, 35\penalty0 (1):\penalty0
  124--134, 2018.

\bibitem[Yap et~al.(2016)Yap, Kamaldin, Lim, Nasrallah, Goh, and
  Yeow]{wearables1}
Hong~Kai Yap, Nazir Kamaldin, Jeong~Hoon Lim, Fatima~A Nasrallah, James
  Cho~Hong Goh, and Chen-Hua Yeow.
\newblock {A Magnetic Resonance Compatible Soft Wearable Robotic Glove for Hand
  Rehabilitation and Brain Imaging}.
\newblock \emph{IEEE transactions on neural systems and rehabilitation
  engineering}, 25\penalty0 (6):\penalty0 782--793, 2016.

\end{thebibliography}
	\numberwithin{equation}{section}	
	\setcounter{equation}{0}
	\setcounter{lemma}{0}
	\setcounter{theorem}{0}
\end{document}